\providecommand{\norm}[1]{\left\lVert#1\right\rVert}
\providecommand{\clip}{\Pi}
\providecommand{\forget}{\cD_f}
\providecommand{\loss}{\mathcal{L}}
  \providecommand{\R}{\mathbb{R}} %
  \providecommand{\EE}[2]{{\mathbb E}_{#1}\left.#2\right. }  %
  \DeclareMathOperator*{\argmin}{arg\,min}
  \DeclareMathOperator*{\esssup}{ess\,sup}
  \providecommand{\cc}{\mathbf{c}}
  \renewcommand{\gg}{\mathbf{g}}
  \providecommand{\xx}{\mathbf{x}}
  \providecommand{\yy}{\mathbf{y}}
  \providecommand{\xim}{\boldsymbol{\xi}}
  \providecommand{\mI}{\mathbf{I}}
  \providecommand{\cA}{\mathcal{A}}
  \providecommand{\cD}{\mathcal{D}}
  \providecommand{\cN}{\mathcal{N}}
  \providecommand{\cO}{\mathcal{O}}
  \providecommand{\cU}{\mathcal{U}}
  \providecommand{\mmu}{\boldsymbol{\mu}}
\definecolor{codegreen}{rgb}{0,0.6,0}
\definecolor{codegray}{rgb}{0.5,0.5,0.5}
\definecolor{codepurple}{rgb}{0.58,0,0.82}
\definecolor{backcolour}{rgb}{0.95,0.95,0.92}
\lstdefinestyle{mystyle}{
    backgroundcolor=\color{backcolour},   
    commentstyle=\color{codegreen},
    keywordstyle=\color{magenta},
    numberstyle=\tiny\color{codegray},
    stringstyle=\color{codepurple},
    basicstyle=\ttfamily\footnotesize,
    breakatwhitespace=false,         
    breaklines=true,                 
    captionpos=b,                    
    keepspaces=true,                 
    numbers=left,                    
    numbersep=5pt,                  
    showspaces=false,                
    showstringspaces=false,
    showtabs=false,                  
    tabsize=2
}
\theoremstyle{plain}
\newtheorem{theorem}{Theorem}[section]
\newtheorem{lemma}[theorem]{Lemma}
\theoremstyle{definition}
\newtheorem{definition}[theorem]{Definition}
\theoremstyle{remark}
\icmltitlerunning{Certified Unlearning for Neural Networks}
\begin{document}

\twocolumn[

\icmltitle{Certified Unlearning for Neural Networks}

\icmlsetsymbol{equal}{*}

\begin{icmlauthorlist}
\icmlauthor{Anastasia Koloskova}{equal,sta}
\icmlauthor{Youssef Allouah}{equal,epfl}
\icmlauthor{Animesh Jha}{sta}
\icmlauthor{Rachid Guerraoui}{epfl}
\icmlauthor{Sanmi Koyejo}{sta}
\end{icmlauthorlist}

\icmlaffiliation{sta}{Stanford University, USA}
\icmlaffiliation{epfl}{EPFL, Switzerland}

\icmlcorrespondingauthor{Anastasia Koloskova}{anakolos@stanford.edu}
\icmlcorrespondingauthor{Youssef Allouah}{youssef.allouah@epfl.ch}

\icmlkeywords{Machine Learning, ICML}

\vskip 0.3in
]

\printAffiliationsAndNotice{\icmlEqualContribution} %

\begin{abstract}
We address the problem of machine unlearning, where the goal is to remove the influence of specific training data from a model upon request, motivated by privacy concerns and regulatory requirements such as the “right to be forgotten.” Unfortunately, existing methods rely on restrictive assumptions or lack formal guarantees.
To this end, we propose a novel method for certified machine unlearning, leveraging the connection between unlearning and privacy amplification by stochastic post-processing. Our method uses noisy fine-tuning on the retain data, i.e., data that does not need to be removed, to ensure provable unlearning guarantees. This approach requires no assumptions about the underlying loss function, making it broadly applicable across diverse settings. We analyze the theoretical trade-offs in efficiency and accuracy and demonstrate empirically that our method not only achieves formal unlearning guarantees but also performs effectively in practice, outperforming existing baselines. Our code is available at \url{https://github.com/stair-lab/certified-unlearning-neural-networks-icml-2025}
\end{abstract}

\section{Introduction}
\label{sec:intro}

Machine unlearning—the process of removing the influence of specific training data from a model—has become an increasingly important challenge in modern machine learning~\citep{nguyen2022survey}. With the widespread adoption of deep learning in fields such as healthcare, natural language processing, and computer vision, concerns over data privacy, security, and control have grown significantly. In particular, regulatory frameworks like the General Data Protection Regulation (GDPR) of the European Union~\cite{voigt2017eu} enforce the \emph{“right to be forgotten”}, requiring organizations to delete user data upon request. However, simply removing data from storage is insufficient if the information remains embedded in a trained model. This has led to the growing interest in unlearning techniques, which seek to eliminate the influence of specific data points while preserving the overall utility of the model. Achieving efficient and reliable unlearning is particularly challenging when working with large-scale neural networks, where full retraining from scratch is computationally prohibitive.

The idea of machine unlearning dates back to~\citet{Cao15:unlearning} and has since inspired a range of approaches. Broadly, unlearning techniques fall into two categories: \emph{exact unlearning}, which aims to completely erase the influence of specific data points, and \emph{approximate unlearning}, which seeks a computationally efficient but approximate removal of information. While exact unlearning offers the strongest theoretical guarantees, it is rarely practical for large-scale models or frequent unlearning requests due to its prohibitive computational costs~\cite{ginart2019making,bourtoule2021machine}. As a result, many existing methods adopt relaxed guarantees, but these often come with significant trade-offs. For instance, some approaches rely on restrictive assumptions about loss functions, such as convex linear models \cite{guo2020certified}, while others lack rigorous theoretical guarantees \cite{graves2021amnesiac,kurmanji2024towards} or require extensive retraining \cite{bourtoule2021machine}. One common heuristic method is fine-tuning on retained data, and potentially gradient ascent on the forget data, to induce catastrophic forgetting in the affected parts of the model~\cite{triantafillou2024we}. While this technique has shown promise in reducing the retained influence of forgotten data, it does not inherently provide certifiable guarantees of unlearning, leaving open questions about its reliability in privacy-sensitive applications.

To address these limitations, we propose a new approximate unlearning framework that builds on the concept of privacy amplification by stochastic post-processing \cite{Balle2019PrivacyAB}. Our method leverages noisy fine-tuning on retained data to enforce provable unlearning guarantees while maintaining computational efficiency. Unlike existing approaches, our framework does not impose restrictive assumptions on the loss function, making it particularly well-suited for non-convex optimization problems such as deep learning. We interpret each noisy fine-tuning step as a form of stochastic post-processing, ensuring that privacy improves progressively with each iteration while balancing the trade-offs between accuracy and computational cost. Through rigorous theoretical analysis and extensive empirical validation, we demonstrate that our method provides both formal guarantees and strong practical performance, making it a viable solution for real-world machine learning applications where frequent unlearning requests must be handled efficiently.

Our key contributions can be summarized as follows:
\begin{itemize}[leftmargin=12pt,itemsep=1pt]
    \item A novel certified unlearning method that integrates noisy fine-tuning with privacy amplification by stochastic post-processing, offering a principled approach to approximate unlearning.
    \item Rigorous unlearning guarantees that do not depend on restrictive assumptions such as loss function smoothness, making the method applicable to a broad range of models, including deep neural networks.
    \item Empirical validation in deep learning applications, demonstrating that our approach not only meets formal unlearning guarantees but also surpasses existing baselines in performance and model utility.
\end{itemize}

\subsection{Related Work}

\textbf{Amplification by post-processing.} 
Our approach is inspired by privacy amplification via stochastic post-processing, a concept in differential privacy where randomized transformations that do not use private data enhance privacy guarantees. The foundational work of \citet{feldman2018privacy} introduced privacy amplification by iteration, demonstrating that when training with differentially private stochastic gradient descent (DP-SGD) on convex objectives, the privacy of an unused training sample improves with the number of optimization steps. \citet{Balle2019PrivacyAB} extended this result, refining the analysis of amplification by iteration and introducing amplification by mixing, which establishes that privacy is further strengthened when applying a Markov kernel satisfying specific mixing conditions. Subsequent work by \citet{Asoodeh2020ContractionO} showed that under bounded domain assumptions, these mixing conditions hold for the Gaussian mechanism, leading to tighter privacy guarantees for DP-SGD. Our proposed unlearning framework—based on noisy fine-tuning on retained data—operates as a stochastic post-processing step that does not access the data to be forgotten. Thus, we extend privacy amplification techniques beyond convex settings to enable certified unlearning in deep learning models.

\textbf{Certified unlearning.}
There is a growing body of work on certified unlearning, but existing approaches are largely inapplicable to neural networks. Most prior methods rely on restrictive assumptions about the model or loss function that do not hold for deep learning.

Several works focus on convex tasks~\citep{guo2020certified, neel2021descent, sekhari2021remember, allouah2024utility}, but this limits their applicability to deep learning. These methods work in practice for logistic regression-style tasks and do not extend to neural networks due to non-convexity.\looseness=-1

Recent works aim to achieve certified unlearning for non-convex tasks~\citep{Golatkar2020MixedPrivacyFI,chourasia2023forget, chien2024langevin, mu2024rewind, Zhang24certifiedunlearning,allouah2024utility}, but still impose significant constraints. All require the loss function to be smooth, limiting them to networks with smooth activations. Most \citep{chourasia2023forget, chien2024langevin, mu2024rewind} also require knowledge of the smoothness constant, restricting applicability to simpler models where this constant is tractable. Furthermore, \citet{allouah2024utility} additionally assumes a unique minimizer, excluding virtually all practical neural architectures. \citet{Zhang24certifiedunlearning} additionally requires knowledge of the minimal eigenvalue of the Hessian at the unique optimal model.
To the best of our knowledge, our approach is the first certified unlearning method that supports arbitrary non-convex tasks, enabling provable unlearning guarantees for practical deep learning models.

\paragraph{Unlearning applications.}
A separate line of research focuses on concept unlearning, which aims to remove specific topics or themes from language models, beyond forgetting particular training samples~\cite{liu2024rethinking}. For instance, work in this domain has explored techniques for eliminating knowledge of topics like ``Harry Potter" or other potentially harmful or unethical content from large language models~\cite{eldan2023s}. These methods often involve intervention at the representation or knowledge distillation level, rather than enforcing formal guarantees of data removal. In contrast, our work focuses on data point unlearning, ensuring that information associated with specific training samples is provably removed while preserving model utility. 
Other explored unlearning settings include unlearning in graph neural networks~\cite{chien2022certifiedgraphunlearning}, in min-max optimization settings~\cite{Liu23:graph_unelarning}, and the adversarial setting with the server possibly forging unlearning~\cite{Thudi21:forging}.

\section{Problem Statement}
\label{sec:problem}

We consider a model $\hat\xx \in \R^d$ trained using algorithm $\cA$ on a dataset $\cD$ of $n$ training examples, i.e., $\hat\xx = \cA(\cD)$. We place no restrictions on $\cA$; it may be SGD, Adam, momentum-SGD, etc.
An unlearning request specifies a subset $\forget \subset \cD$, referred to as the forget set, which we wish to erase from the model. Ideally, we could retrain the model from scratch on the retain set $\cD \setminus \forget$, yielding $\xx_{u} = \cA(\cD \setminus \forget)$, but this is often computationally prohibitive.
Instead, we aim to design an approximate unlearning algorithm $\cU$ that outputs a model retaining ``no information" about $\forget$. To this end, $\cU$ takes as input the original model $\hat\xx = \cA(\cD)$, the unlearning request $\forget$, and the full training dataset $\cD$.
Formally, unifying several prior definitions~\cite{ginart2019making,guo2020certified}, we require $\cU$ to satisfy the guarantees below.

\begin{definition}[$(\varepsilon, \delta)$-unlearning]\label{def:unlearning}
Let $\varepsilon\geq 0, \delta \in [0, 1]$. 
We say that $\cU$ is $(\varepsilon, \delta)$-unlearning algorithm for $\cA$ if there exists a certifying algorithm $\bar \cA$, such that for any forget and initial datasets $\forget \subset \cD$ 
and any observation $O \in \R^d$,
\begin{align*}
    \Pr[\cU(\cA(\cD), \cD, \forget) = O] \leq e^\varepsilon \Pr[\bar\cA(\cD \backslash\forget) = O] + \delta,\\
    \Pr[\bar\cA(\cD \backslash\forget) = O] \leq e^\varepsilon \Pr[\cU(\cA(\cD), \cD, \forget) = O]  + \delta.
\end{align*}
\end{definition}
For simplicity, we refer to approximate unlearning as $(\varepsilon, \delta)$-unlearning for some values of $\varepsilon$ and $\delta$, inspired by $(\varepsilon, \delta)$-differential privacy~\cite{dwork2014algorithmic}. 
This formulation parallels differential privacy by treating $\mathcal{D}_f$ as “private” and $\mathcal{D} \setminus \mathcal{D}_f$ as “public,” thereby ensuring $(\varepsilon, \delta)$-privacy for the forget set.
Indeed, this notion ensures it is statistically difficult to distinguish the output of the unlearning algorithm $\mathcal{U}$ from that of a certifying algorithm $\bar\cA$ that has no access to the forget set $\mathcal{D}_f$.

Importantly, the definition does not fix $\bar\cA$ but only requires its existence, allowing flexibility to capture various prior definitions. For example, setting $\bar\cA = \cA$ recovers definitions from \citep{ginart2019making, guo2020certified}, while setting $\bar\cA = \mathcal{U}(\mathcal{A}(\mathcal{D} \setminus \mathcal{D}_f), \mathcal{D} \setminus \mathcal{D}_f, \varnothing)$ aligns with \citep{allouah2024utility, sekhari2021remember}. We primarily adopt the latter form in this work. Note that our choice of the certifying algorithm $\bar\cA$ is purely theoretical and does not require running additional computational steps in practice.
Finally, we emphasize that Definition~\ref{def:unlearning} naturally supports non-adaptive sequential unlearning, where data points are removed one-by-one without revisiting earlier removals.

\paragraph{Baselines.}
We now describe two straightforward baselines for achieving $(\varepsilon, \delta)$-unlearning. The first baseline is \emph{output perturbation}, which applies the standard Gaussian mechanism to the original model $\hat{\xx}$. The procedure involves clipping the model parameters to ensure bounded sensitivity and adding Gaussian noise to the model's output. Formally, the unlearning procedure is defined as: \begin{align}\label{eq:outputperturbation} \xx_0 = \Pi_{C_0}(\hat{\xx}) + \xim_0; \quad \xim_0 \sim \mathcal{N}\left(0, \tfrac{8C_0^2 \ln(1.25/\delta)}{\varepsilon^2} \mI_d\right), \end{align}
where $\Pi_{C_0}$ represents the clipping operation, and $\xim_0$ is noise sampled from the Gaussian distribution, with sufficient magnitude to ensure privacy~\cite{dwork2014algorithmic}. While theoretically sound, output perturbation often performs poorly in practice, as the required noise magnitude is large, which can significantly degrade the utility of the model.

Another baseline is \emph{retraining from scratch}, where the forget dataset $\mathcal{D}_f$ is discarded, and the model is retrained from scratch on the remaining data $\mathcal{D} \setminus \mathcal{D}_f$ using the algorithm $\mathcal{A}$. Although this guarantees perfect unlearning (i.e., $(0,0)$-unlearning), it is computationally expensive and requires substantial memory resources, which undermines the efficiency objectives of approximate unlearning.

\section{Algorithm}

Our approach is based on fine-tuning the model with stochastic gradient descent (SGD) using only the retained data $\cD \setminus \cD_f$, while incorporating regularization. Recall that in this approach, we initialize from the original model $\hat{\xx}$ and update it for $T \geq 1$ iterations as follows for every $t \in \{0, \ldots, T-1\}$:
\begin{align}\label{eq:simple_finetuning}
    \xx_{t + 1} = \xx_t - \gamma (\gg_t + \lambda \xx_t), && \xx_0 = \hat \xx,
\end{align}
where $\gg_t$ is the stochastic gradient computed on the retained data, $\gamma$ is the learning rate, and 
$\lambda$ a regularization parameter. 

While this method may induce some empirical forgetting due to the phenomenon of catastrophic forgetting in SGD~\cite{goodfellow2013empirical}, it does not provide formal $(\varepsilon, \delta)$-unlearning as required by Definition~\ref{def:unlearning}.
Therefore, to frame each fine-tuning step as a stochastic post-processing operation~\cite{Balle2019PrivacyAB} applied to the initial model $\hat{\xx}$, we introduce \emph{gradient clipping} and \emph{model clipping}, both combined with Gaussian privacy noise. These modifications allow us to map the process to different stochastic post-processing mechanisms~\cite{Balle2019PrivacyAB}.

\paragraph{Gradient clipping.}
Our primary approach relies on gradient clipping, where we clip gradients before applying updates, followed by noise addition. This method resembles standard DP-SGD~\cite{abadi2016deep} but differs in that gradient updates exclude any ``private" (forget) data points:
\begin{align}\label{eq:gradientclipping} \begin{aligned} 
\xx_0 &= \Pi_{C_0}(\hat{\xx}),\\
\xx_{t + 1} &= \xx_t - \gamma \left(\Pi_{C_1}(\gg_t) + \lambda \xx_{t} \right) + \xim_{t+1}, \end{aligned} \end{align}
where $\xim_{t+1} \sim \mathcal{N}(0, \sigma^2 \mathbf{I}_d)$ is Gaussian noise and $\Pi_{C_0}, \Pi_{C_1}$ are the clipping operators of radius $C_0 > 0$ and $C_1 >0$ respectively\footnote{Following standard differential privacy mechanisms, our clipping is by norm, i.e., $\Pi_{C}(\xx) \coloneqq \xx \min\{\tfrac{C}{\norm{\xx}}, 1\}$  with radius $C$.}.
We analyze this method in the theoretical framework of privacy amplification by iteration~\cite{feldman2018privacy}, to show that redistributing noise across multiple steps enables a reduction in noise at the initial step while maintaining strong privacy guarantees. This noise reduction retains the original model's performance.
The regularization with parameter $\lambda$ additionally allows implicitly controlling the norm of the model $\xx_t$ that is increased due to the presence of the noise $\xim_t$.

\paragraph{Model clipping.}
An alternative approach involves model clipping, where each update is clipped to a predefined radius before noise addition:
\begin{align}\label{eq:modelclipping} \begin{aligned}
\xx_0 &= \hat{\xx}\\
\mathbf{x}_{t + 1} &= \Pi_{C_2}(\mathbf{x}_{t} - \gamma (\mathbf{g}_t + \lambda \mathbf{x}_t)) + \boldsymbol{\xi}_{t+1}, \end{aligned} \end{align}
 where $\boldsymbol{\xi}_{t+1} \sim \mathcal{N}(0, \sigma^2 \mathbf{I}_d)$ and $\boldsymbol{\xi}_0 \sim \mathcal{N}(0, \sigma_0^2 \mathbf{I}_d)$. Since the gradient $\gg_t$ is computed solely on the retain data, the argument of the clipping operator can be interpreted as a post-processing transformation of the private model, given by the mapping 
$\psi(\xx_t) \coloneqq \xx_t - \gamma (\gg_t + \lambda \xx_t)$. The clipping and noise addition ensure differential privacy guarantees~\cite{dwork2014algorithmic}.
Standard results on post-processing state that such transformations preserve or improve existing differential privacy guarantees. Additionally, privacy amplification by stochastic post-processing~\cite{Balle2019PrivacyAB} suggests that each additional step can further enhance privacy beyond that of the previous model.

\section{Theoretical Analysis}
We now present the approximate unlearning guarantees of the gradient and model clipping variants of our unlearning method introduced in the previous section.
We also theoretically compare these with previous non-convex certified unlearning algorithms.
We defer all proofs to Appendix~\ref{app:proofs}.

\subsection{Unlearning Guarantees}

We first present in Theorem~\ref{th:pabi} the unlearning guarantees of the Gradient clipping approach, as defined in Equation~\eqref{eq:gradientclipping}.

\begin{restatable}[Gradient clipping]{theorem}{pabi}
\label{th:pabi}
Let $T \geq 1, \gamma, \sigma >0, \lambda \geq 0$, $\delta \in (0,1), \varepsilon \in (0, 3\log(1/\delta))$.
Consider $T$ iterations of the unlearning algorithm defined in~\eqref{eq:gradientclipping}.
    We obtain $(\varepsilon, \delta)$-unlearning if:
    \begin{enumerate}
        \item Without regularization $(\lambda = 0)$:
    \begin{align}
    \label{eq:pabi_noise}
    \sigma^2 &= \frac{9 \log(1/\delta)}{\varepsilon^2 T} \left( C_0 + C_1 \gamma T \right)^2.
    \end{align}
        \item With regularization $(\lambda > 0)$:  if $\gamma \lambda \in (\tfrac{1}{2}, 1)$ and
        \begin{align}
        \label{eq:pabi_noise_regularized}
                \sigma^2 &= \frac{72\gamma \lambda\log(1/\delta)}{\varepsilon^2} \left( C_0 \left(1-\gamma \lambda \right)^{T} + \frac{C_1}{\lambda} \right)^2.
        \end{align}
    \end{enumerate}
\end{restatable}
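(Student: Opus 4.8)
The plan is to use the certifying algorithm $\bar\cA(\cD\setminus\forget) \coloneqq \cU(\cA(\cD\setminus\forget),\, \cD\setminus\forget,\, \varnothing)$ --- the same procedure~\eqref{eq:gradientclipping}, only initialized at $\Pi_{C_0}(\cA(\cD\setminus\forget))$ instead of $\Pi_{C_0}(\cA(\cD))$ --- so that Definition~\ref{def:unlearning} reduces to proving that the output laws of the two runs are $(\varepsilon,\delta)$-indistinguishable in the differential-privacy sense. I would first fix a coupling in which both runs use the same mini-batch sequence $(B_t)$ from $\cD\setminus\forget$ and the same noise vectors $(\xim_t)$, and write the two trajectories as $\xx_{t+1} = \psi_t(\xx_t) + \xim_{t+1}$ and $\xx_{t+1}' = \psi_t(\xx_t') + \xim_{t+1}$ with the \emph{common} update map $\psi_t(\xx) = \xx - \gamma(\Pi_{C_1}(\nabla\ell(\xx;B_t)) + \lambda\xx)$; then the two runs differ only through their initialization, and everything following $\xx_0$ is a stochastic post-processing that never accesses $\forget$.

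The crux is that $\psi_t$ need \emph{not} be Lipschitz (we assume nothing about the loss): it only decomposes as $\psi_t = \phi + r_t$ where $\phi(\xx) = (1-\gamma\lambda)\xx$ is a contraction --- strict when $\lambda>0$, the identity when $\lambda=0$ --- and $\norm{r_t(\xx)} = \gamma\norm{\Pi_{C_1}(\nabla\ell(\xx;B_t))} \le \gamma C_1$ uniformly in $\xx$. Hence, along the coupled trajectories, an initial discrepancy $\norm{\xx_0 - \xx_0'} \le 2C_0$ (both points lie in the ball of radius $C_0$) is re-injected, by at most $2\gamma C_1$, at every one of the $T$ steps. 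I would then invoke privacy amplification by iteration~\citep{feldman2018privacy} in its shifted-R\'enyi-divergence form: Gaussian noise of variance $\sigma^2$ dissipates a shift of size $a$ at R\'enyi cost $\alpha a^2/(2\sigma^2)$; contractions do not increase the shifted divergence; and absorbing a bounded perturbation costs its magnitude in shift budget. Telescoping over the $T$ noisy steps with a free allocation $(a_t)$ that must cover the propagated sensitivity --- total $\Theta(C_0 + \gamma C_1 T)$ when $\lambda=0$, and with each contribution discounted by a power of $(1-\gamma\lambda)$ when $\lambda>0$, so that the initial term decays as $C_0(1-\gamma\lambda)^T$ while the re-injected terms sum to $\gamma C_1\sum_{k\ge0}(1-\gamma\lambda)^k = C_1/\lambda$ --- and then optimizing $(a_t)$ (equal allocation when $\lambda=0$; a geometrically weighted one when $\lambda>0$), I obtain a R\'enyi bound of the form
\[
D_\alpha(\xx_T \,\|\, \xx_T') \;\lesssim\; \frac{\alpha}{\sigma^2}\cdot
\begin{cases}
(C_0 + \gamma C_1 T)^2 / T, & \lambda = 0,\\[1mm]
\gamma\lambda\,(C_0(1-\gamma\lambda)^T + C_1/\lambda)^2, & \gamma\lambda \in (\tfrac{1}{2},1),
\end{cases}
\]
and, by symmetry of the construction, the same bound for $D_\alpha(\xx_T' \,\|\, \xx_T)$.

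To finish, I would convert this R\'enyi bound to $(\varepsilon,\delta)$-indistinguishability by the standard rule that $(\alpha,\alpha\rho)$-R\'enyi differential privacy for all $\alpha>1$ implies $(\rho + 2\sqrt{\rho\log(1/\delta)},\,\delta)$-differential privacy, taking $\alpha = 1 + \sqrt{\log(1/\delta)/\rho}$, with $\rho$ the coefficient of $\alpha$ in the bound above. Substituting the stated values of $\sigma^2$ makes $\rho$ of order $\varepsilon^2/\log(1/\delta)$, and the hypothesis $\varepsilon < 3\log(1/\delta)$ is exactly what guarantees $\rho \le \log(1/\delta)$, hence $\rho + 2\sqrt{\rho\log(1/\delta)} \le \varepsilon$; this yields both inequalities of Definition~\ref{def:unlearning}. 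Because clipping renders all the sensitivities above independent of the data, the bound is uniform over $\forget \subset \cD$, as required.

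The main obstacle is exactly the absence of smoothness: the SGD update cannot be treated as a contraction, so the clipped gradient has to be carried as an adversarial bounded perturbation re-injected at \emph{every} step, which is why a plain post-processing argument (which would see only the $C_0$ term) does not suffice, why the effective sensitivity grows to $\Theta(C_0 + \gamma C_1 T)$, and why amplification by iteration --- rather than composition or a one-shot Gaussian mechanism --- is the right instrument. The remaining delicate point is the shift-budget bookkeeping in the regularized case: one must optimize $(a_t)$ against geometrically weighted sensitivities and verify the per-step feasibility constraints of the telescoping, which is what pins the window to $\gamma\lambda \in (\tfrac{1}{2},1)$ --- $\gamma\lambda<1$ so that $\phi$ genuinely contracts, and $\gamma\lambda>\tfrac{1}{2}$ so that one lies in the fast-mixing regime where the effective horizon is $\Theta(1/(\gamma\lambda))$ rather than $T$, which is also the origin of the extra $\gamma\lambda$ factor in~\eqref{eq:pabi_noise_regularized}. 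Everything else is the assembly of known lemmas.
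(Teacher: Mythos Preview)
Your proposal is correct and follows essentially the same route as the paper: couple two runs of~\eqref{eq:gradientclipping} differing only in initialization, use the shifted R\'enyi divergence machinery of~\citet{feldman2018privacy} with the key observation that the clipped update map satisfies $\norm{\psi_t(\xx)-\psi_t(\xx')}\le|1-\gamma\lambda|\norm{\xx-\xx'}+2\gamma C_1$ (your contraction-plus-bounded-perturbation decomposition is exactly this), optimize the shift allocation $(a_t)$, and convert to $(\varepsilon,\delta)$. One small point: the lower bound $\gamma\lambda>\tfrac12$ is not a ``fast-mixing'' requirement in the paper's argument---it is invoked only at the very end to simplify the exact expression $\tfrac{\gamma\lambda(2-\gamma\lambda)}{1-(1-\gamma\lambda)^{2T}}$ arising from the geometric sums into the cleaner constant in~\eqref{eq:pabi_noise_regularized}.
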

The proof relies on techniques from privacy amplification by iteration introduced by \citet{feldman2018privacy}, which is a form of privacy amplification through stochastic post-processing~\cite{Balle2019PrivacyAB}. We extend the original analysis \citet{feldman2018privacy} that is applicable only to convex functions to the non-convex case. The key ingredient in our analysis is the shift-reduction lemma (Lemma~\ref{lem:bound_shift}), which enables us to control the growth of the Rényi divergence between (i) the unlearned model initialized from the full model and (ii) the unlearned model initialized from training only on the retained data, across iterations. 

Unlike the original privacy amplification by iteration framework, which assumes smoothness and convexity of the loss function~\cite{feldman2018privacy}, our approach circumvents these assumptions by introducing gradient clipping. This is reflected in the sufficient noise magnitude required to achieve $(\varepsilon, \delta)$-unlearning, as given in~\eqref{eq:pabi_noise}. Specifically, in the absence of regularization $\lambda = 0$, gradient clipping at threshold $C_1$ induces a dependence of the form:
\begin{equation}\label{eq:sigma}
    \sigma^2 = \cO\left(\frac{\log(1/\delta)}{\varepsilon^2} \left( \frac{C_0^2}{T} + \gamma^2 T C_1^2 \right)\right).
\end{equation}
Intuitively, for a large enough number of iterations, we can trade off the effect of the initial clipping radius $C_0$ for a minor cost proportional to the learning rate $\gamma$ and the per-iteration clipping radius $C_1$. 
Fortunately, this cost can be controlled by choosing a sufficiently small learning rate $\gamma$.
In particular, if $\gamma = \tfrac{C_0}{C_1T}$, then asymptotically in $T$ we get $\sigma^2 = \cO\left(\tfrac{C_0^2 \log(1/\delta)}{\varepsilon^2 T}\right)$. 
This implies that the required noise magnitude decreases as the number of iterations $T$ increases, ultimately tending to zero in the limit $T \to \infty$.

Given \( C_0, C_1 \), and \( \gamma \), we can upper bound the optimal number of unlearning steps by minimizing \eqref{eq:sigma}. Specifically, setting \( T \) larger than  
 $$T^\star \coloneqq \argmin_{T} \left \{\frac{C_0^2}{T} + \gamma^2 T C_1^2 \right \} = \frac{C_0}{\gamma C_1}$$
leads to more iterations (\( T > T^\star \)) with increased noise per iteration (\( \sigma_T \geq \sigma_{T^\star} \) by definition of \( T^\star \)), while achieving the same \((\epsilon, \delta)\)-unlearning.

Finally, in the regularized case $\lambda >0$, the noise expression~\eqref{eq:pabi_noise_regularized} for achieving $(\varepsilon, \delta)$-unlearning simplifies to:
\begin{equation}
    \sigma^2 = \cO\left(\frac{\gamma\log(1/\delta)}{\varepsilon^2} \left( \lambda C_0^2 \exp(-\lambda \gamma T)  + \frac{C_1^2}{\lambda} \right)\right).
\end{equation}
Here, regularization enables an exponential reduction in $T$ of the dependence on the initial clipping threshold $C_0$, effectively mitigating its impact over time. However, this comes at the cost of an increased dependence on the per-iteration clipping threshold $C_1$ , scaling inversely with the regularization factor $\lambda$.
While this suggests a smaller noise magnitude per iteration, overly strong regularization may degrade the model’s performance, as we further analyze in the next section.
Finally, we provide a refined, but complex, formula for noise magnitudes in Theorem~\ref{th:general-pabi} (appendix) using Rényi divergences, which we apply before precise Rényi-to-DP conversion~\cite{balle2020hypothesis} in practice.

Next, we state the unlearning guarantees of the Model clipping approach, as defined in Equation~\eqref{eq:modelclipping}.
\begin{table*}[t]
\centering
\renewcommand{\arraystretch}{1.25}
\resizebox{0.95\textwidth}{!}{%
\begin{tabular}{l c c c c} 
\toprule
{\bf Algorithm} & \multicolumn{2}{c}{\bf Variance of Noise Injected}  & {\bf Assumptions} \\
\cmidrule{2-3}
     & Max. per Iteration & Iterations \\
    \midrule
    Output Perturbation (baseline) & $C_0^2$  & $1$ & \\ 
    \midrule
    Gradient Clipping \eqref{eq:gradientclipping} & $\gamma C_1 C_0$ &  ${C_0}/{\gamma C_1}$ &  \\ 
    \midrule
    Gradient Clipping \eqref{eq:gradientclipping} (w/ regularization) & ${\gamma C_1^2}/{\lambda}$    & $\log{({\lambda C_0}/{C_1})}/{\gamma\lambda}$  &  \\ 
    \midrule
        Model Clipping \eqref{eq:modelclipping} &  $C_2^2$  &   $\log(1/\delta)$ & \\ 
    \midrule
    Langevin Diffusion~\cite{chourasia2023forget}  & \textit{no explicit expression} & -- & smoothness, boundedness, noisy training \\ 
    \midrule
    Rewind-to-Delete~\citep{mu2024rewind}  & \textit{exponential in smoothness constant} & -- & smoothness, noisy training \\
\bottomrule
\end{tabular}%
}\vspace{1mm}
\caption{Summary comparison of certified unlearning accountants for non-convex tasks. 
$C_0$: initial clipping threshold, $C_1/C_2$: running clipping threshold for gradient and model clipping resp., $\gamma$: learning rate, $\lambda$: $\ell_2$-regularization factor.
We ignore absolute constants and  multiplicative factor $\tfrac{\log(1/\delta)}{\varepsilon^2}$ which is in the noise variance of all methods. We note that the entry ``Langevin Diffusion" also covers the work of~\citet{chien2024langevin}. \emph{Model Clipping} and \emph{Gradient Clipping} algorithms effectively reduce the maximum noise per iteration at the cost of doing more noisy SGD steps compared to the output perturbation.  More details on the comparison are given in Section~\ref{sec:comparison}.}

\label{tab:comparison}
\end{table*}

\begin{restatable}[Model clipping]{theorem}{cc}
\label{th:cc}
Let $T\geq1, C_0, C_2, \sigma_0, \varepsilon>0$, and $\delta\in (0,1)$.
Denote for every $r>0$, 
\begin{equation}
    \label{eq:contraction}
    \theta_{\varepsilon}(r) \coloneqq Q\left( \frac{\varepsilon}{r} - \frac{r}{2}\right) - e^\varepsilon Q\left( \frac{\varepsilon}{r} + \frac{r}{2}\right),
\end{equation}
where for all $t\in \R$, $Q(t) \coloneqq \frac{1}{\sqrt{2\pi}} \int_{t}^\infty e^{-u^2/2} du$.

Consider $T$ iterations of the unlearning algorithm defined in~\eqref{eq:modelclipping}.
We obtain $(\varepsilon, \delta)$-unlearning if 
\begin{align}
    T \geq \frac{\log(1/\delta) + \log{\theta_{\varepsilon}(\frac{2C_0}{\sigma_0})}}{\log{(1/\theta_{\varepsilon}(\frac{2C_2}{\sigma}))}}.
    \label{eq:cc_num_iterations}
\end{align}
In particular, for any $T \geq 1, \varepsilon \in (0,1)$, it suffices to have 
    \begin{align}
    \sigma^2 = \frac{8C_2^2 \ln(1.25)}{\varepsilon^2}\left[ 1 + \frac{1}{T} \left(\ln(1.25/\delta) -\frac{\sigma_0^2\varepsilon^2}{8C_0^2} \right) \right].
    \label{eq:noise_cc}
\end{align}
\end{restatable}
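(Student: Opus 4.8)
\textbf{Proof plan for Theorem~\ref{th:cc} (Model clipping).}

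The plan is to cast the model-clipping iteration~\eqref{eq:modelclipping} as a sequence of \emph{contracting Markov kernels} acting on the pair of coupled processes: the one initialized from the true private model $\hat\xx$ and the one initialized from the certifying run on $\cD\setminus\forget$. The natural divergence to track is not Rényi divergence (as in the gradient-clipping case) but a hockey-stick / $(\varepsilon,\delta)$-type divergence, since the quantity $\theta_\varepsilon(r)$ in~\eqref{eq:contraction} is exactly the ``contraction coefficient under $E_\varepsilon$-divergence'' of the Gaussian channel on a bounded domain established by \citet{Asoodeh2020ContractionO} (building on \citet{Balle2019PrivacyAB}); here $Q$ is the Gaussian tail and the argument $r$ plays the role of (twice clipping radius)/(noise std). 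First I would set up the two coupled trajectories, note that at step $0$ they differ by applying the Gaussian mechanism with parameter $\sigma_0$ to two inputs whose distance is at most the diameter of the clipping ball, i.e.\ at most $2C_0$, which yields an initial $E_\varepsilon$-divergence bounded by $\theta_\varepsilon(2C_0/\sigma_0)$ (or $1$ trivially).

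The second and central step is the \emph{per-iteration contraction}. Each subsequent update has the form $\xx_{t+1} = \Pi_{C_2}(\psi(\xx_t)) + \xim_{t+1}$, where $\psi(\xx_t) = \xx_t - \gamma(\gg_t+\lambda\xx_t)$ is a data-independent (up to the common retain-data gradient) deterministic map applied identically to both trajectories, the clipping $\Pi_{C_2}$ forces both iterates into a ball of diameter $2C_2$, and then independent Gaussian noise $\cN(0,\sigma^2\mI_d)$ is added. By the post-processing inequality the deterministic map $\psi$ does not increase the $E_\varepsilon$-divergence; the contraction of the ``clip-then-add-Gaussian'' Markov kernel on the bounded set then multiplies the divergence by $\theta_\varepsilon(2C_2/\sigma)$. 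Iterating over $t=1,\dots,T$ and folding in the initial bound gives
\begin{align}
    E_\varepsilon\!\left(\cU \,\|\, \bar\cA\right) \le \theta_\varepsilon\!\left(\tfrac{2C_0}{\sigma_0}\right)\,\theta_\varepsilon\!\left(\tfrac{2C_2}{\sigma}\right)^{T},
    \label{eq:plan_contraction}
\end{align}
and likewise in the other direction (the definition of $\theta_\varepsilon$ and of $E_\varepsilon$ being symmetric in the appropriate sense). Requiring the right-hand side of~\eqref{eq:plan_contraction} to be at most $\delta$ and taking logarithms yields exactly the iteration count~\eqref{eq:cc_num_iterations}, using $\log\theta_\varepsilon(2C_2/\sigma)<0$ since $\theta_\varepsilon(r)<1$.

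For the closed-form noise bound~\eqref{eq:noise_cc}, I would not use the sharp $\theta_\varepsilon$ but rather the elementary estimate that the Gaussian mechanism with sensitivity $s$ and noise std $\sigma$ satisfies $(\varepsilon,\delta')$-DP with $\delta' \le 1.25\exp(-\varepsilon^2\sigma^2/(8s^2))$, i.e.\ $E_\varepsilon \le 1.25\exp(-\varepsilon^2\sigma^2/(8s^2))$, applied with $s=2C_0$ at step $0$ and $s=2C_2$ at each later step. Multiplying the per-step bounds as in~\eqref{eq:plan_contraction} gives $E_\varepsilon \le (1.25)^{T}\exp\!\big(-\tfrac{\varepsilon^2}{8}(\tfrac{\sigma_0^2}{4C_0^2}+\tfrac{T\sigma^2}{4C_2^2})\big)$; wait — more carefully one keeps a single factor $1.25$ by bounding the product of the remaining $(1.25)$'s into the exponent via $\ln 1.25$, then solving $\delta \ge 1.25\exp(-\cdots)$ for $\sigma^2$ reproduces~\eqref{eq:noise_cc} after rearrangement (the $-\sigma_0^2\varepsilon^2/(8C_0^2)$ term is precisely the ``credit'' from the initial noise $\sigma_0$). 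The main obstacle I anticipate is \textbf{Step 2}: rigorously invoking the bounded-domain contraction coefficient $\theta_\varepsilon$ for the Gaussian kernel and checking that the deterministic drift $\psi$ genuinely is a common post-processing that is applied \emph{before} the fresh noise (so that the shift between the two coupled iterates is controlled purely by the clipping diameter $2C_2$, independent of how far $\psi$ may move a single iterate) — this requires care about the order clip/noise and about the fact that the same stochastic gradient $\gg_t$ (drawn from retain data only, hence identical in law for $\cU$ and $\bar\cA$) is used in both trajectories, so one conditions on the gradient randomness and the noise realizations being coupled appropriately. Secondary care is needed to make the two inequalities in Definition~\ref{def:unlearning} both follow, and to verify the stated parameter ranges ($\varepsilon\in(0,1)$, $T\ge1$) keep the bracket in~\eqref{eq:noise_cc} positive.
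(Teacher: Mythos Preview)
Your proposal is correct and follows essentially the same route as the paper: track the hockey-stick divergence $E_\varepsilon$ between the two coupled trajectories, use the Balle--Asoodeh contraction coefficient $\theta_\varepsilon(2C_2/\sigma)$ for each clip-then-add-Gaussian step to get the geometric decay~\eqref{eq:plan_contraction}, and for~\eqref{eq:noise_cc} replace $\theta_\varepsilon$ by the cruder Dwork bound $1.25\exp(-\sigma^2\varepsilon^2/(2\|\mu_1-\mu_2\|^2))$ and solve for $\sigma^2$. The only slip is the constant in your Gaussian tail bound (with $s=\|\mu_1-\mu_2\|\le 2C$ the exponent is $\sigma^2\varepsilon^2/(2s^2)=\sigma^2\varepsilon^2/(8C^2)$, not $\sigma^2\varepsilon^2/(8s^2)$), which once corrected yields~\eqref{eq:noise_cc} directly.
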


The proof of Theorem~\ref{th:cc} relies on recent advances in privacy amplification by stochastic post-processing due to \citet{Balle2019PrivacyAB} and \citet{Asoodeh2020ContractionO}.
The initial iteration in \eqref{eq:modelclipping} provides $(\varepsilon_0, \delta_0)$ unlearning since it is the standard Gaussian mechanism from differential privacy~\cite{dwork2014algorithmic}. 
Moreover, assuming that the model at step $t$ of Algorithm~\eqref{eq:modelclipping} guarantees $(\varepsilon_t, \delta_t)$-unlearning, using the contraction coefficients~\cite{Asoodeh2020ContractionO} approach, we show that the next iteration amplifies the unlearning guarantee as follows: 
\begin{align*}
    (\varepsilon_{t + 1}, \delta_{t + 1}) = (\varepsilon_{t}, \theta_\varepsilon(\tfrac{2C_2}{\sigma}) \cdot \delta_{t}),
\end{align*}
where the expression of the amplification factor $\theta_\varepsilon(\tfrac{2C_2}{\sigma}) \in (0,1)$ is given in~\eqref{eq:contraction}. This means that after $T$ iterations, our algorithm guarantees $(\varepsilon_{T}, \delta_{T}) = (\varepsilon_{0}, \theta_\varepsilon(\tfrac{2C_2}{\sigma})^T \delta_{0})$ unlearning. 
Stated differently, given any target $\varepsilon, \delta$ and any noise magnitudes $\sigma^2, \sigma_0^2$ and clipping thresholds $C_1, C_0$, we show that it sufficient for the number of iterations to be at least that in~\eqref{eq:cc_num_iterations} to guarantee $(\varepsilon, \delta)$-unlearning.

While the expression of the amplification factor $\theta_\varepsilon(\tfrac{2C_2}{\sigma})$ given in~\eqref{eq:contraction} is complex, we remark that it is a decreasing function of $\sigma$ taking values in $(0,1)$.
In fact, assuming that $\varepsilon \in (0,1)$, and given any number of iterations $T$, we state a simple expression of the sufficient noise magnitude $\sigma^2$ in~\eqref{eq:noise_cc}. 
This simplified expression is only for analytical purposes, since it gives looser unlearning guarantees.

\subsection{Theoretical Comparison}
\label{sec:comparison}

To compare the various unlearning methods, we analyze the number of iterations required and the noise injected per iteration to achieve the same $(\varepsilon, \delta)$-unlearning guarantee. An effective method should minimize noise per iteration while keeping the number of iterations reasonable to preserve model accuracy. Since all methods rely on noisy updates, we focus on the magnitude of noise injected. A summary of our findings is provided in Table~\ref{tab:comparison}, with details below.

\paragraph{Output perturbation.} 
We recall that this is a natural baseline, defined in~\eqref{eq:outputperturbation}, whereby we first project the original model with clipping threshold $C_0$ and add noise of magnitude $\sigma^2 = \frac{8 \ln(1.25/\delta)C_0^2}{\varepsilon^2}$, which guarantees $(\varepsilon, \delta)$-unlearning for $\varepsilon, \delta \in (0,1)$, following~\citep[Theorem~A.1]{dwork2014algorithmic}.

\paragraph{DP training.}
Training with DP guarantees unlearning for free; we do not need to inject noise after receiving unlearning requests.
However, the noise needed may be larger than with other unlearning methods.
Indeed, consider unlearning a  set of $k$ samples with DP--SGD. For this, we need \emph{group-DP}: if a mechanism is $(\varepsilon,\delta)$-DP for single-record changes, then it is \((k\varepsilon,\;k e^{k\varepsilon} \delta)\)-DP for any pair of datasets that differ in $\le k$ records \citep[Lemma~2.2]{Vadhan2017}. Consequently, to attain the same  \((\varepsilon,\delta)\)-unlearning guarantee one must run DP--SGD with noise $\sigma^2 = \tfrac{2C^2k^2}{\varepsilon^2}\left(\ln( \tfrac{1.25k}{\delta})+k\varepsilon\right)$.
This is typically much larger noise than with our techniques, given that it is at least quadratic in $k$, which may scale with the size of the dataset.
This is in line with recent findings on the theoretical separation between DP and certified unlearning~\cite{sekhari2021remember,allouah2024utility}.

\paragraph{Gradient clipping.}
From Theorem~\ref{th:pabi}, $T \geq 1$ iterations of Gradient clipping~\eqref{eq:gradientclipping} with noise magnitude $\sigma^2 = \frac{9 \log(1/\delta)}{\varepsilon^2 T} \left( C_0 + C_1 \gamma T \right)^2$ satisfies $(\varepsilon, \delta)$-unlearning assuming $\varepsilon \leq 3 \log(1/\delta)$.
Setting $T = \tfrac{C_0}{\gamma C_1}$ minimizes the noise to
\begin{equation}
    \sigma^2 = \frac{36 \gamma C_1 C_0 \log(1/\delta)}{\varepsilon^2}.
\end{equation}
This substantially reduces noise per iteration compared to output perturbation—by a factor of $\tfrac{C_0}{C_1 \gamma}$, which is significant when $\gamma \ll \tfrac{C_0}{C_1}$. A small learning rate or large initial clipping $C_0$ can make this method particularly effective in preserving model accuracy.

For the regularized version of gradient clipping we recall that noisy gradient descent with $T$ iterations and constant noise level, given by the expression $\sigma^2 = \frac{72\gamma \lambda\log(1/\delta)}{\varepsilon^2} \left( C_0 \left(1-\gamma \lambda \right)^{T} + \frac{C_1}{\lambda} \right)^2$
satisfies $(\varepsilon, \delta)$-unlearning under the assumption $\varepsilon \leq 3 \log(1/\delta)$. 
Setting $T = \tfrac{1}{\eta \lambda} \log(\tfrac{\lambda C_0}{C_1})$ of
\begin{equation}
    \sigma^2 = \frac{C_1^2 \gamma \log(1/\delta)}{\lambda \varepsilon^2}.
\end{equation}
This approach outperforms the unregularized variant when $\lambda > \tfrac{C_1}{C_0}$, requiring only logarithmic iterations in the initial clipping radius $C_0$. This suggests that projecting onto a larger set can better preserve accuracy, though it may require stronger regularization, which could degrade performance in some tasks.

\begin{figure*}[t!]
    \centering
    \vspace{3mm}
        \begin{subfigure}{0.49\textwidth}
        \centering
        \includegraphics[width=\linewidth]{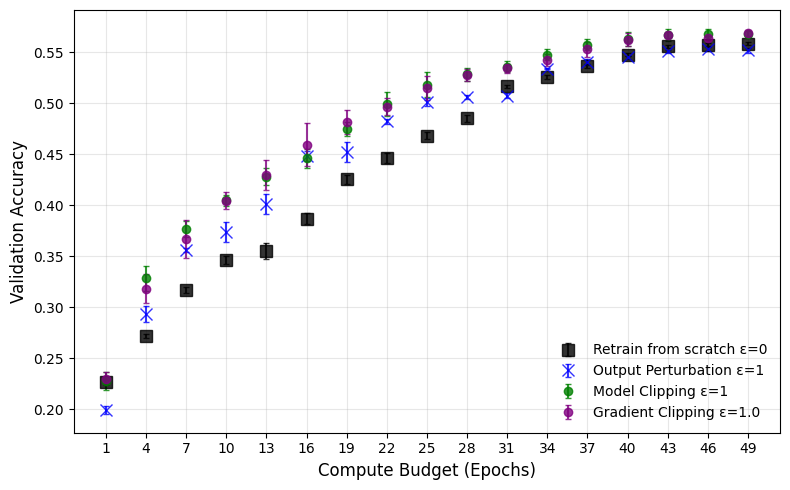}
        \caption{CIFAR-10}
        \label{fig:sub2}
    \end{subfigure}
    \hfill
    \begin{subfigure}{0.49\textwidth}
        \centering
        \includegraphics[width=\linewidth]{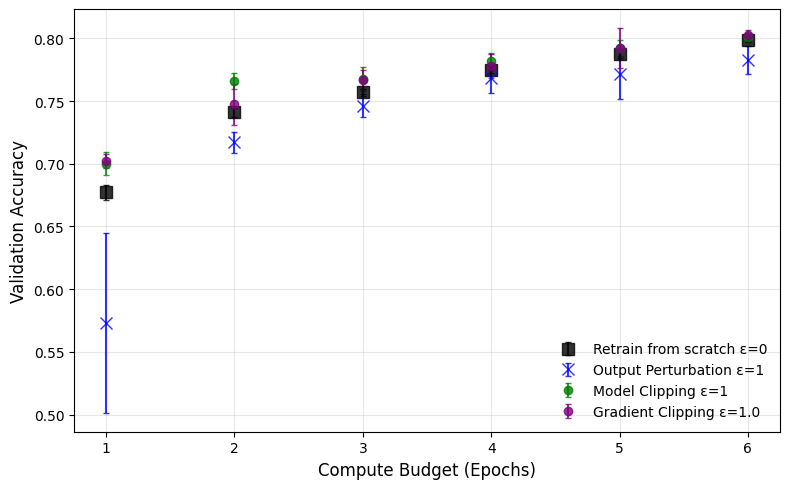}
        \caption{MNIST}
        \label{fig:sub1}
    \end{subfigure}
    \caption{Accuracy of \emph{Gradient and Model Clipping} versus compute budget (epochs) on  CIFAR-10 (left) and MNIST (right), to satisfy $(1, 10^{-5})$-unlearning. We compare to two baselines: retraining from scratch and output perturbation, detailed in Section~\ref{sec:problem}. Across all the compute budgets gradient and model clipping achieves higher accuracy than the baselines, with the difference being larger for smaller compute budgets. 
    }
    \label{fig:main_grad_clip}
\end{figure*}

\paragraph{Model clipping.}
We recall from Theorem~\ref{th:cc} that $T \geq 1$ iterations of Model clipping~\eqref{eq:modelclipping} with noise magnitude $$\sigma^2 = \frac{8C_2^2 \ln(1.25)}{\varepsilon^2}  + \frac{8C_2^2 \ln(1.25)}{T \varepsilon^2} \left[\ln(1.25/\delta) -\frac{\sigma_0^2\varepsilon^2}{8C_0^2}  \right]$$ is provably sufficient to achieve $(\varepsilon, \delta)$-unlearning.
Assume that the initial noise magnitude $\sigma_0^2$ is at most 
$\frac{8C_0^2\ln(1.25/\delta)}{\varepsilon^2}$,
as the latter magnitude is sufficient to obtain $(\varepsilon, \delta)$-unlearning in one iteration as in the output perturbation baseline.
Therefore, the order of magnitude of the minimum value of the noise 
\begin{equation}
\sigma^2 = \frac{8C_2^2 \ln(1.25)}{\varepsilon^2}    
\end{equation}
can be attained within $T = \ln(1.25/\delta)$ iterations.
This represents a significant improvement over the baseline, reducing the noise per iteration by a factor of $\tfrac{C_0^2}{C_2^2}$. If $C_2$ is small or if the initial clipping threshold $C_0$ is aggressive, this reduction can be substantial. Compared to amplification by iteration, this method requires fewer iterations (only logarithmic in $1/\delta$), though it may introduce more noise per iteration when the learning rate is small.

\paragraph{Prior works.}
Existing certified unlearning methods that do not assume convexity of the loss function include~\cite{chourasia2023forget,chien2024langevin,mu2024rewind}. However, unlike our approach, these methods rely on the assumption that the loss function is smooth\footnote{That is, for some $L\geq0$, by denoting $\loss$ the loss function, we have $\norm{\nabla \loss(\xx) - \nabla \loss(\yy)} \leq L \norm{\xx - \yy}$ for all $\xx, \yy \in \R^d$.} and are tailored to specific training algorithms.
For instance, \citet{chourasia2023forget} and \citet{chien2024langevin} analyze training with noisy projected gradient descent, leveraging both smoothness and specific training dynamics to establish unlearning guarantees. These guarantees stem from the convergence of the training process to a limiting distribution, but additional restrictive assumptions are required.  Notably, the smoothness constant is needed not only for theoretical analysis but also to determine the appropriate noise level at each step to achieve $(\epsilon, \delta)$-unlearning. This constraint limits the applicable function class to those where the smoothness constant is easily computable, effectively excluding modern neural networks. \citet{chourasia2023forget} assume that the loss function is bounded, while \citet{chien2024langevin} require that the training’s limiting distribution satisfies an isoperimetric inequality. These constraints significantly limit the applicability of their methods, primarily to smooth convex tasks such as logistic regression~\cite{chien2024langevin}.
Similarly, \citet{mu2024rewind} address non-convex loss functions but assume that training follows gradient descent with output perturbation. Their approach also relies on smoothness and requires injecting noise with a magnitude that scales exponentially with the smoothness parameter, which can be prohibitive in practice.
In contrast, our approach removes these smoothness constraints and is not tied to a specific training algorithm, making it applicable to a broader class of learning problems.

\section{Experimental Evaluation}
\begin{table*}[t]
\centering
\renewcommand{\arraystretch}{1.5}
\resizebox{0.6\linewidth}{!}{
\begin{tabular}{l c c|c c}
\toprule
 \textbf{Accuracy} &
\multicolumn{2}{c}{ \bf Baselines} & \multicolumn{2}{c}{\bf Noisy Fine-Tuning (ours)}
\\
\cmidrule{2-3}
\cmidrule{4-5}
 & Retrain & Output Perturbation & Gradient Clipping & Model Clipping \\ \midrule
30\% & 6 & 5 ($\approx$ 16 \% faster) & 4 ($\approx$ 33 \% faster) &  3 ($\approx$ 50 \% faster)\\
35\% & 11 & 7 ($\approx$ 41 \% faster)& 6 ($\approx$ 50 \% faster) & 6 ($\approx$ 50 \% faster) \\
40\% & 18 & 12 ($\approx$ 33 \% faster) & 10 ($\approx$ 44 \% faster) & 10 ($\approx$ 44 \% faster) \\
45\% & 23 & 17 ($\approx$ 26 \% faster) &  16 ($\approx$ 30 \% faster) & 17 ($\approx$ 26 \% faster) \\
50\%& 30 & 25 ($\approx$ 16 \% faster)& 23 ($\approx$ 23 \% faster)& 24 ($\approx$ 20 \% faster)\\
\bottomrule
\end{tabular}
}  %
\vspace{1mm}
\caption{Number of epochs required to reach the target accuracy for our algorithms and the baselines, and their saving compared to retraining from scratch for the CIFAR-10 dataset. \emph{Gradient Clipping} and \emph{Model Clipping} consistently save above 20\% of compute, sometimes reaching 50\% of compute savings. The output perturbation baseline also consistently improves over the retrain from scratch, however is consistently slower than the Gradient and Model clipping algorithms.
}%
\label{tab:target_acc}
\end{table*}

In this section, we present an empirical evaluation of our proposed unlearning method, in its two variants Gradient Clipping~\eqref{eq:gradientclipping} and Model Clipping~\eqref{eq:modelclipping}, on two benchmark datasets: MNIST~\cite{deng2012mnist} and CIFAR-10~\cite{cifar}.
We first detail the experimental setup, and then describe the results and observations stemming from Table~\ref{tab:target_acc} and  Figures~\ref{fig:main_grad_clip} and~\ref{fig:convergence}.

\subsection{Setup}

For MNIST, we train a small neural network with two layers and approximately 4,000 parameters. For CIFAR-10, we use a slightly larger network with two convolutional blocks followed by a linear layer, totaling 20,000 parameters. In both cases, the forget set consists of a randomly selected 10\% subset of the full dataset.

\paragraph{Baselines.} We compare our methods against two baselines presented in Section~\ref{sec:problem}: retraining from scratch and output perturbation~\eqref{eq:outputperturbation}. Retraining from scratch involves fully retraining the model after removing the forget set. Output perturbation applies noise directly to the final model parameters to achieve certified unlearning, before fine-tuning the model on the retain data if the compute budget allows. 
To the best of our knowledge, no existing method provides certified unlearning guarantees for non-convex tasks without requiring knowledge of the smoothness constant of the loss function.%

\paragraph{Procedures.} When retraining from scratch, the model is reinitialized using the same distribution as in the original training phase. In all experiments, we first train a model on the entire dataset until convergence. %
We set $\varepsilon=1, \delta = 10^{-5}$ for all experiments.
For our unlearning algorithms, we continue clipping and adding noise until the desired $(\varepsilon, \delta)$-unlearning guarantee is met. 
In all experiments, the privacy target is reached before exhausting the iteration budget, in less than 100 iterations (see Appendix~\ref{sec:details} for the exact number of unlearning steps to reach target privacy).
We therefore continue fine-tuning the model on the retained dataset without additional noise or clipping, using the same hyperparameters as in retraining from scratch. This means that in all of our experiments unlearning is cheap and effectively finds a new initialization for the finetuning process, that preserves some information from the original model $\hat\xx$. 
All training, unlearning, and fine-tuning phases use stochastic gradient descent (SGD) with a constant step size. 
Further experimental details are provided in the appendix.

\subsection{Results and Observations}

We now present our experimental comparison. We compare the algorithms for fixed target accuracy and fixed compute budget, and finally show convergence behavior.

\paragraph{Fixed target accuracy.} In Table~\ref{tab:target_acc}, we present the time required for each algorithm to reach the target accuracy for unlearning on CIFAR-10. Our results show that both gradient clipping and model clipping achieve the desired accuracy in a comparable number of steps, significantly outperforming the baseline methods. Notably, compared to retraining from scratch, our algorithms offer substantial computational savings—reducing the required steps by up to 50\%. Interestingly, while the simple output perturbation baseline also improves upon retraining from scratch, its efficiency gains are less pronounced. This suggests that while output perturbation approach can be beneficial, more advanced unlearning methods such as gradient and model clipping yield considerably greater improvements.

\begin{figure}
    \centering
    \vspace{3mm}    \includegraphics[width=0.98\linewidth]{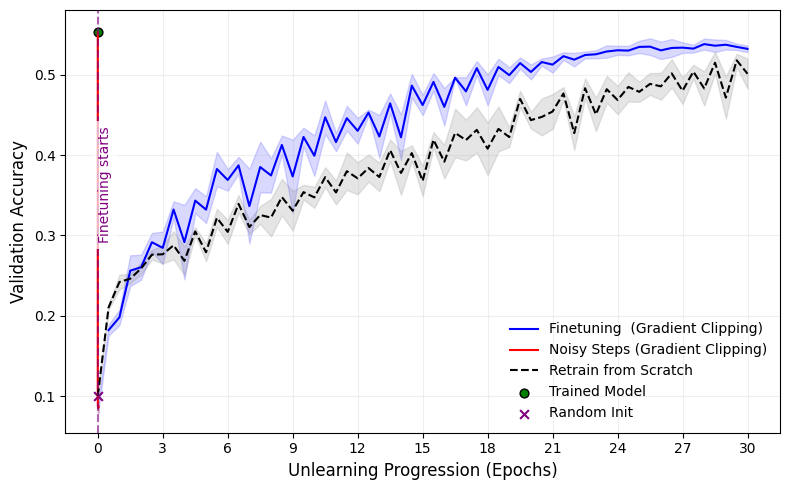}
    \caption{Convergence behavior of \emph{Gradient Clipping} with $\gamma = 0.01, C_0 = 20, C_1 = 10, \lambda = 50, \sigma = 0.25$ and the retraining from scratch baseline on the CIFAR-10 dataset. The gradient clipping method is applied for the first 30 iterations, followed by standard fine-tuning. Initially, gradient clipping degrades performance but retains useful information, allowing fine-tuning to recover and surpass the retraining baseline quickly.
    }
    \label{fig:convergence}
\end{figure}

\begin{figure*}[t!]
    \centering
    \vspace{3mm}
        \begin{subfigure}{0.49\textwidth}
        \centering
        \includegraphics[width=\linewidth]{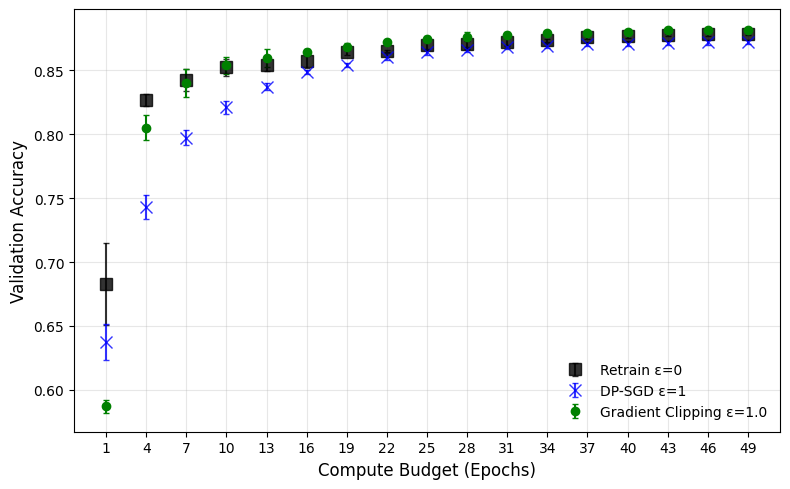}
        \caption{CIFAR-10}
        \label{fig:sub2}
    \end{subfigure}
    \hfill
    \begin{subfigure}{0.49\textwidth}
        \centering
        \includegraphics[width=\linewidth]{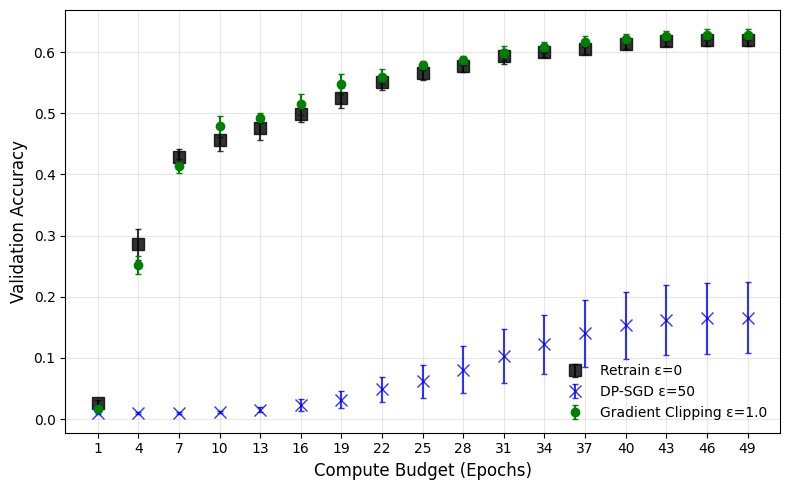}
        \caption{CIFAR-100}
        \label{fig:sub1}
    \end{subfigure}
    \caption{Accuracy of \emph{Gradient Clipping} versus compute budget (epochs) on  CIFAR-10 (left) and CIFAR-100 (right) using a ResNet-18 feature extractor pretrained on public data, to satisfy $(1, 10^{-5})$-unlearning. 
    }
    \label{fig:rebuttal_exp}
\end{figure*}

\paragraph{Fixed compute budget.} On Figure~\ref{fig:main_grad_clip}, we show the resulting accuracy for varying compute budgets for gradient and model clipping approaches on MNIST and CIFAR10 datasets. 
Our experiments demonstrate that our proposed unlearning method, in both its gradient and model clipping variants, consistently achieves higher accuracy compared to output perturbation and retraining from scratch across all compute budgets. This improvement is particularly pronounced in low-compute settings, where retraining from scratch struggles to recover performance due to the limited number of optimization steps. In contrast, our methods effectively leverage the retained model parameters, enabling faster recovery while ensuring certified unlearning. This provides substantial savings, for example, to reach an accuracy of 40\% on CIFAR dataset, both gradient and model clipping needs only 10 epochs, while output perturbation needs 12 epochs (20\% longer), and retrain from scratch requires 18 epochs ($\approx 80\%$ longer).

As the compute budget increases, the performance gap between our methods and retraining from scratch gradually narrows. This suggests that while our algorithms provide a strong advantage in resource-constrained scenarios, full retraining may still be the optimal choice given sufficient computing power. However, we note that in practical settings, where compute resources are finite, our approaches offer substantial time savings to reach a particular accuracy.

\paragraph{Convergence curve.}
In Figure~\ref{fig:convergence}, we illustrate the convergence behavior for the gradient clipping algorithm on the CIFAR-10 dataset with parameters $\gamma = 0.01, C_0 = 20, C_1 = 10$, and $\lambda = 50$. In that case, unlearning is performed for the first 30 iterations, which significantly decreases the accuracy of the original model to almost zero. However, during the fine-tuning stage, the accuracy quickly catches up and outperforms retraining from scratch in around 1 epoch. This suggests that our stochastic post-processing approach does not completely erase all prior training. Despite the bad accuracy initially, the model can recover the useful information stored in it quickly.
A similar convergence curve is observed in all other settings, as unlearning is always performed for a relatively small number of steps ($<100$, see Appendix~\ref{sec:details}). These findings highlight the robustness of our approach and its adaptability across different datasets and model architectures.

Overall, we observe that both variants of our method---gradient and model clipping---achieve considerable gains of up to 50\% of time savings over the baselines. Further analysis of our results shows that the noise magnitude and clipping strategies play a crucial role in balancing unlearning guarantees with model utility. We found that gradient clipping has a larger range of hyperparameters that achieve an advantage over the baselines, making it easier to tune.

\subsection{Transfer Learning and Comparison with DP-SGD}

To evaluate our methods in more complex settings, we conducted experiments on CIFAR-100 and CIFAR-10 using ResNet architectures~\cite{he2016deep} pretrained on public data (ImageNet~\cite{deng2009imagenet}). This setup, where unlearning is applied to the last few layers of a pretrained model, has become standard in recent certified approximate unlearning works~\cite{guo2020certified,chien2024langevin}, although the latter works focus on a logistic regression task. More precisely, we remove the last layer of ResNet-18 (pretrained on public data) and replace it with a 3-layer fully connected neural network head, which makes the task non-convex. We first train the head on the full data, and then unlearn the forget data from the head.  While we unlearn only the head, we certify the \emph{whole} model because the frozen feature extractor is public and unchanged. In this setting we also compare against DP-SGD with group-privacy baseline as defined in Section~\ref{sec:comparison}, this produces a certified unlearnt model, so we spend the unlearning compute budget on finetuning the model on the retain data. On CIFAR-10 our method attains 85\,\% accuracy in 9 epochs, 10\,\% faster than retrain and 47\,\% faster than DP-SGD (Fig.~\ref{fig:rebuttal_exp}). The gap widens on CIFAR-100: we reach 60\,\% accuracy in 32 epochs versus 34 for retrain, while DP-SGD never exceeds 20\,\% within the 50-epoch budget. The poor DP-SGD curve confirms the theoretical predictions from Sec.~\ref{sec:comparison}: group-privacy forces $\sqrt{k}$ more noise, $k$ being the number of forget samples, and thus hurts accuracy even under a much weaker privacy budget ($\varepsilon=50$ vs.\ our $\varepsilon=1$)

\section{Conclusion and Future Work}
We introduced a new certified machine unlearning method that provides formal guarantees while remaining broadly applicable to modern neural networks. Our approach leverages the connection between unlearning and privacy amplification through stochastic post-processing, enabling effective removal of data influence without imposing assumptions on the loss function. By applying noisy fine-tuning to the retain set, our methods achieve both theoretical soundness and practical effectiveness, outperforming existing baselines in empirical evaluations.

Despite these strengths, our approach has certain limitations. First, the effectiveness of our method is constrained by the curse of dimensionality inherent in differential privacy, which can make scaling to models with very large numbers of parameters more challenging. Second, our unlearning framework is designed specifically for stochastic gradient descent (SGD) during the unlearning stage, as we do not retain memory from earlier steps. However, this restriction does not apply to the initial model training or post-unlearning fine-tuning, allowing for flexibility in those phases.
Future work could explore extensions to more complex architectures and alternative optimization methods, potentially improving scalability while maintaining strong unlearning guarantees. Our findings highlight the feasibility of certified unlearning in realistic deep learning settings for the first time, offering a promising direction for privacy-preserving and efficient machine learning.
\section*{Impact Statement}
This paper presents work whose goal is to advance the field of Machine Learning. There are many potential societal consequences of our work, none of which we feel must be specifically highlighted here.
\section*{Acknowledgements}
SK acknowledges support by NSF 2046795 and 2205329, IES R305C240046, ARPA-H, the MacArthur Foundation, Schmidt Sciences, OpenAI, and Stanford HAI.
YA acknowledges support by SNSF grant 200021\_200477.

\bibliography{biblio}
\bibliographystyle{icml2025}

\clearpage
\appendix
\onecolumn
\section{Proofs}\label{app:proofs}

\subsection{Theorem~\ref{th:cc}: Model Clipping}

\paragraph{Preliminaries.}
We first recall the hockey-stick divergence and previous results on privacy amplification by stochastic post-processing~\cite{Balle2019PrivacyAB}.

\begin{definition}[Hockey-stick divergence] 
\label{def:hs_divergence}
Let $\varepsilon \geq 0$, and $\mu, \nu$ two probability measures defined over $\R^d$.
We define
    \begin{align*}
        \mathrm{E}_{\varepsilon}(\mu~ \|~\nu) \coloneqq \int_{\R^d} d[\mu - e^{\varepsilon} \nu]_+ = \sup_{A \subset \R^d } \left(\mu(A) - e^\varepsilon \nu(A)\right),
    \end{align*}
    where $[~\cdot~]_+ \coloneqq \max{\{0, \cdot\}}$.
\end{definition}

\begin{lemma}[\cite{Balle2019PrivacyAB}, Theorem~1 (adapted)]
\label{lem:balle}
Let $\varepsilon \geq 0$, $K$ be a Markov kernel taking inputs in $\R^d$, and $\mu , \nu$ be two probability distributions over $\R^d$. We have
    \begin{align*}
    \mathrm{E}_{\varepsilon}(\mu K ~\|~ \nu K) \leq \mathrm{E}_{\varepsilon}(\mu ~\|~ \nu) \cdot \sup_{\xx_1, \xx_2 \in \R^d} \mathrm{E}_{\varepsilon}(K(\xx_1) ~\|~ K(\xx_2)).
\end{align*}
\end{lemma}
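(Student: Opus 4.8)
The plan is to prove this hockey-stick contraction by combining the variational (supremum-over-sets) form of $\mathrm{E}_{\varepsilon}$ from Definition~\ref{def:hs_divergence} with the Hahn--Jordan decomposition of the signed measure $\mu - e^\varepsilon \nu$. Writing $\eta \coloneqq \mathrm{E}_{\varepsilon}(\mu \| \nu)$ and $\beta \coloneqq \sup_{\xx_1,\xx_2} \mathrm{E}_{\varepsilon}(K(\xx_1) \| K(\xx_2))$, the goal is to show $\mathrm{E}_{\varepsilon}(\mu K \| \nu K) \leq \eta \beta$. First I would decompose $\mu - e^\varepsilon \nu = \alpha - \rho$ into its positive and negative parts $\alpha, \rho \geq 0$ (mutually singular). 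By the definition of the divergence, $\alpha(\R^d) = \int d[\mu - e^\varepsilon \nu]_+ = \eta$, and comparing total masses (using $\mu(\R^d) - e^\varepsilon \nu(\R^d) = 1 - e^\varepsilon$) gives $\rho(\R^d) = e^\varepsilon - 1 + \eta$. This bookkeeping of the two masses is what will later produce a coefficient of the correct sign.

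Next, using linearity of post-composition with the kernel, I would write $\mathrm{E}_{\varepsilon}(\mu K \| \nu K) = \sup_A \int K(\xx)(A)\, d(\mu - e^\varepsilon \nu)(\xx) = \sup_A \left[ \int f_A\, d\alpha - \int f_A\, d\rho \right]$, where $f_A(\xx) \coloneqq K(\xx)(A) \in [0,1]$. The crucial observation is that these particular test functions inherit a two-point inequality from $\beta$: for all $\xx, \yy$, one has $f_A(\xx) - e^\varepsilon f_A(\yy) = K(\xx)(A) - e^\varepsilon K(\yy)(A) \leq \mathrm{E}_{\varepsilon}(K(\xx) \| K(\yy)) \leq \beta$, hence $f_A(\xx) \leq e^\varepsilon f_A(\yy) + \beta$.

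Then I would integrate this pointwise bound. Fixing $\xx$ and integrating in $\yy$ against $d\rho$ gives $f_A(\xx)\, \rho(\R^d) \leq e^\varepsilon \int f_A\, d\rho + \beta\, \rho(\R^d)$; dividing by $\rho(\R^d)$ and integrating in $\xx$ against $d\alpha$ yields $\int f_A\, d\alpha \leq \frac{e^\varepsilon \eta}{\rho(\R^d)} \int f_A\, d\rho + \beta \eta$. Substituting $\rho(\R^d) = e^\varepsilon - 1 + \eta$, the coefficient of $\int f_A\, d\rho$ simplifies to $\frac{(e^\varepsilon - 1)(\eta - 1)}{e^\varepsilon - 1 + \eta} \leq 0$, using $0 \leq \eta \leq 1$ and $e^\varepsilon \geq 1$. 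Since $\int f_A\, d\rho \geq 0$, this nonpositive term can be dropped, leaving $\int f_A\, d\alpha - \int f_A\, d\rho \leq \beta \eta$; taking the supremum over $A$ finishes the proof.

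The main obstacle is the asymmetric $e^\varepsilon$ weighting: unlike the plain data-processing inequality (which gives contraction factor $1$ and follows from joint convexity of $\mathrm{E}_{\varepsilon}$), here one must leverage $\beta$ quantitatively, and this works only because the inherited two-point bound carries the \emph{same} $e^\varepsilon$ as the divergence and because the mass bookkeeping forces the crossing coefficient to be nonpositive. I would also verify the degenerate case $\rho(\R^d) = 0$ (which forces $\varepsilon = 0$ and $\mu = \nu$, so both sides vanish) and confirm $\eta \leq 1$ so that the sign argument is valid.
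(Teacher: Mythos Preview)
The paper does not supply its own proof of this lemma; it is stated as a preliminary result and simply cited as an adaptation of Theorem~1 in \cite{Balle2019PrivacyAB}. Your argument is correct and is in fact essentially the proof one finds in that reference: the Hahn--Jordan decomposition of $\mu - e^\varepsilon \nu$, the two-point bound $f_A(\xx) \leq e^\varepsilon f_A(\yy) + \beta$ inherited from the definition of $\beta$, and the mass bookkeeping that forces the cross-coefficient $\tfrac{(e^\varepsilon-1)(\eta-1)}{e^\varepsilon-1+\eta}$ to be nonpositive. Your handling of the degenerate case $\rho(\R^d)=0$ is also correct, and the observation $\eta \leq 1$ (from $\mu(A)\leq 1$, $e^\varepsilon\nu(A)\geq 0$) is exactly what closes the sign argument. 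There is nothing to fix.
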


\begin{lemma}[\cite{Asoodeh2020ContractionO}, Lemma~2]
\label{lem:asoodeh}
Let $\varepsilon \geq 0$, $\mmu_1 \neq \mmu_2 \in \R^d$ and $\sigma>0$. We have
\begin{align*}
    \mathrm{E}_{\varepsilon}(\cN(\mmu_1, \sigma^2 \mI_d) ~\|~ \cN(\mmu_2, \sigma^2 \mI_d)) = Q\left( \frac{\varepsilon \sigma}{\norm{\mmu_1 - \mmu_2}} - \frac{\norm{\mmu_1 - \mmu_2}}{2 \sigma}\right) - e^{\varepsilon} Q\left( \frac{\varepsilon \sigma}{\norm{\mmu_1 - \mmu_2}} + \frac{\norm{\mmu_1 - \mmu_2}}{2\sigma}\right),
\end{align*}
where for all $t\in \R$, $Q(t) \coloneqq \frac{1}{\sqrt{\pi}} \int_{t}^\infty e^{-u^2/2} du$. 
\end{lemma}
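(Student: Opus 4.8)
The plan is to compute the hockey-stick divergence directly from its variational form in Definition~\ref{def:hs_divergence}, exploiting the fact that the two Gaussians share the spherical covariance $\sigma^2 \mI_d$, so that their likelihood ratio is governed by a single scalar. The whole computation reduces to standardizing two Gaussian tails once the right one-dimensional picture is identified.

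First I would reduce to one dimension. Both the Gaussian family and the divergence $\mathrm{E}_\varepsilon(\mu \| \nu) = \sup_{A} (\mu(A) - e^\varepsilon \nu(A))$ are invariant under a common invertible affine isometry $T$ of $\R^d$: since $T$ is a bijection, $T^{-1}A$ ranges over all measurable sets as $A$ does, whence $\mathrm{E}_\varepsilon(T_\# \mu \| T_\# \nu) = \mathrm{E}_\varepsilon(\mu \| \nu)$. Choosing $T$ a rigid motion that sends $\mmu_1 - \mmu_2$ onto $\Delta \ee_1$ with $\Delta \coloneqq \norm{\mmu_1 - \mmu_2}$, the two measures become product Gaussians whose marginals in coordinates $2,\dots,d$ coincide. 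The ratio $d\mu/d\nu$ therefore depends on the first coordinate alone, and it suffices to analyze the one-dimensional pair $\cN(\mu_1, \sigma^2)$ and $\cN(\mu_2, \sigma^2)$ with $\mu_1 - \mu_2 = \Delta > 0$, the orthogonal factors integrating to $1$.

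Next I would pin down the optimal set. The supremum defining $\mathrm{E}_\varepsilon$ is attained on the likelihood-ratio super-level set $A^\star = \{x : \tfrac{d\mu}{d\nu}(x) > e^\varepsilon\}$, since one maximizes $\int_A (d\mu - e^\varepsilon \, d\nu)$ by retaining exactly the points where the signed density is positive. For equal-variance Gaussians the log-likelihood ratio is affine, $\log \tfrac{d\mu}{d\nu}(x) = \tfrac{\Delta(2x - \mu_1 - \mu_2)}{2\sigma^2}$, so $A^\star$ is the half-line $\{x \geq x_0\}$ with $x_0 = \tfrac{\mu_1 + \mu_2}{2} + \tfrac{\sigma^2 \varepsilon}{\Delta}$. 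Evaluating $\mathrm{E}_\varepsilon(\mu \| \nu) = \mu(A^\star) - e^\varepsilon \nu(A^\star)$ then only requires standardizing the two tails: a direct computation gives $\tfrac{x_0 - \mu_1}{\sigma} = \tfrac{\varepsilon\sigma}{\Delta} - \tfrac{\Delta}{2\sigma}$ and $\tfrac{x_0 - \mu_2}{\sigma} = \tfrac{\varepsilon\sigma}{\Delta} + \tfrac{\Delta}{2\sigma}$, so that $\mu(A^\star) = Q(\tfrac{\varepsilon\sigma}{\Delta} - \tfrac{\Delta}{2\sigma})$ and $\nu(A^\star) = Q(\tfrac{\varepsilon\sigma}{\Delta} + \tfrac{\Delta}{2\sigma})$, yielding the claimed identity after resubstituting $\Delta = \norm{\mmu_1 - \mmu_2}$.

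The arithmetic is elementary once the setup is in place; the one step deserving care — and the main conceptual obstacle — is justifying that the $d$-dimensional supremum collapses to a one-dimensional half-space problem. This rests on two facts working together: the invariance of the hockey-stick divergence under a common invertible transformation, and the identification of the optimal region with the likelihood-ratio super-level set, which for equal covariances is a half-space whose boundary is orthogonal to $\mmu_1 - \mmu_2$. Everything past that is standardization of Gaussian tails in terms of $Q$.
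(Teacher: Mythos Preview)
Your argument is correct and is essentially the standard derivation of this identity: reduce to one dimension via rotational invariance and the product structure of spherical Gaussians, identify the optimal set as the likelihood-ratio super-level set (a half-line), and standardize the two tails. The paper itself does not prove this lemma; it is quoted verbatim from \cite{Asoodeh2020ContractionO} as a preliminary, so there is no in-paper proof to compare against, but your proof matches the derivation in that reference.
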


\begin{lemma}[\cite{dwork2014algorithmic}, Theorem~A.1 (paraphrased)]
\label{lem:dpdelta}
Let $\varepsilon \in (0,1)$ and $\mmu_1 \neq \mmu_2 \in \R^d, \sigma > 0$. We have
    \begin{align*}
    \mathrm{E}_{\varepsilon}(\cN(\mmu_1, \sigma^2 \mI_d) ~\|~ \cN(\mmu_2, \sigma^2 \mI_d)) \leq 1.25 \exp{\left(-\frac{\sigma^2\varepsilon^2}{2\norm{\mmu_1 - \mmu_2}^2}\right)}.
\end{align*}
\end{lemma}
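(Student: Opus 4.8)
The plan is to reduce the multivariate hockey-stick divergence to a one-dimensional Gaussian-tail estimate via the \emph{exact} profile of Lemma~\ref{lem:asoodeh}, and then to control that tail with an explicit sub-Gaussian bound, splitting into a high-noise and a low-noise regime. Write $\Delta \coloneqq \norm{\mmu_1 - \mmu_2} > 0$. First I would invoke Lemma~\ref{lem:asoodeh}, which gives
\[
\mathrm{E}_{\varepsilon}\!\left(\cN(\mmu_1,\sigma^2\mI_d)\,\|\,\cN(\mmu_2,\sigma^2\mI_d)\right) = Q(t) - e^{\varepsilon} Q(s),
\]
with $t \coloneqq \tfrac{\varepsilon\sigma}{\Delta} - \tfrac{\Delta}{2\sigma}$ and $s \coloneqq \tfrac{\varepsilon\sigma}{\Delta} + \tfrac{\Delta}{2\sigma}$, where $Q$ is the standard Gaussian upper tail. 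From this I read off two elementary majorants that I would combine: since $e^{\varepsilon}Q(s)\ge 0$, the divergence is at most $Q(t)$; and directly from Definition~\ref{def:hs_divergence}, $\mathrm{E}_\varepsilon(\mu\|\nu)=\sup_A(\mu(A)-e^\varepsilon\nu(A))\le \sup_A \mu(A)=1$.

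The main case is the \emph{high-noise} regime $t\ge 0$, equivalently $\sigma^2\ge \tfrac{\Delta^2}{2\varepsilon}$. Here I would apply the standard bound $Q(t)\le \tfrac12 e^{-t^2/2}$ (valid for $t\ge 0$) together with the algebraic identity $\tfrac{t^2}{2}=\tfrac{\varepsilon^2\sigma^2}{2\Delta^2}-\tfrac{\varepsilon}{2}+\tfrac{\Delta^2}{8\sigma^2}$, obtained by expanding the square. Dropping the nonnegative term $\tfrac{\Delta^2}{8\sigma^2}$ in the exponent and using $\varepsilon<1$ yields
\[
\mathrm{E}_\varepsilon \le Q(t) \le \tfrac12 e^{-t^2/2} = \tfrac12 e^{\varepsilon/2}\, e^{-\Delta^2/(8\sigma^2)}\, e^{-\varepsilon^2\sigma^2/(2\Delta^2)} \le \tfrac12 e^{1/2}\, e^{-\varepsilon^2\sigma^2/(2\Delta^2)},
\]
and since $\tfrac12 e^{1/2}\approx 0.82 < 1.25$ the claimed bound follows with room to spare. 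In the complementary \emph{low-noise} regime where the target already exceeds one, namely $1.25\,e^{-\varepsilon^2\sigma^2/(2\Delta^2)}\ge 1 \iff \sigma^2\le \tfrac{2\Delta^2\ln(1.25)}{\varepsilon^2}$, the trivial bound $\mathrm{E}_\varepsilon\le 1$ suffices.

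It remains to check that these two regimes cover all $\sigma>0$. A short comparison shows $\tfrac{\Delta^2}{2\varepsilon}\le \tfrac{2\Delta^2\ln(1.25)}{\varepsilon^2}$ exactly when $\varepsilon\le 4\ln(1.25)\approx 0.89$, so for such $\varepsilon$ the two regimes overlap and their union is $(0,\infty)$. \textbf{The one delicate point}, which I expect to be the main obstacle, is the narrow residual window $\tfrac{2\Delta^2\ln(1.25)}{\varepsilon^2}<\sigma^2<\tfrac{\Delta^2}{2\varepsilon}$ that is nonempty only for $\varepsilon$ close to $1$; there $t<0$ (so the tail bound is unavailable) while the target is below one (so the trivial bound is unavailable). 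To close it I would use that $\mathrm{E}_\varepsilon$ is nonincreasing in $\sigma$ -- a consequence of the data-processing inequality, since adding independent Gaussian noise is a post-processing Markov kernel -- together with the fact that the target is also decreasing in $\sigma$. On the window it then suffices to compare the profile at the left endpoint with the target at the right endpoint, a single inequality that holds with comfortable slack, since keeping \emph{both} terms of $Q(t)-e^{\varepsilon}Q(s)$ keeps the left-endpoint profile below $\tfrac12$ while the right-endpoint target stays above $0.97$. Finding one uniform inequality that avoids this case split would be the cleanest improvement.
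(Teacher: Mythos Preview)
The paper does not prove this lemma; it is stated purely as a citation of Theorem~A.1 in \cite{dwork2014algorithmic} and used as a black box, so there is no in-paper proof to compare against.

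Your argument is essentially the standard Gaussian-mechanism tail analysis and is correct. The high-noise bound via $Q(t)\le\tfrac12 e^{-t^2/2}$ and the trivial low-noise bound are both sound; the monotonicity of $\mathrm{E}_\varepsilon$ in $\sigma$ follows from data processing (convolving both Gaussians with the Markov kernel $\cN(0,(\sigma_2^2-\sigma_1^2)\mI_d)$); and the endpoint comparison you sketch does close the narrow window for $\varepsilon\in(4\ln 1.25,\,1)$, since on that range the right-endpoint target satisfies $1.25\,e^{-\varepsilon/4}>0.97$ while a direct evaluation of $Q(t_L)-e^{\varepsilon}Q(s_L)$ at the left endpoint stays below $0.35$.

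One simplification worth noting: if you are willing to invoke the cited result rather than reprove it, the case split vanishes. Given any $\sigma>0$, set $\delta\coloneqq 1.25\,e^{-\sigma^2\varepsilon^2/(2\Delta^2)}$. If $\delta<1$, this choice is exactly $\sigma^2=2\ln(1.25/\delta)\,\Delta^2/\varepsilon^2$, so Theorem~A.1 of \cite{dwork2014algorithmic} directly gives $(\varepsilon,\delta)$-DP, i.e., $\mathrm{E}_\varepsilon\le\delta$; if $\delta\ge 1$, the trivial bound $\mathrm{E}_\varepsilon\le 1$ suffices. This is presumably why the paper simply cites the result.
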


\paragraph{Main proof.}
We now proceed to proving the main theorem.

\cc*
\begin{proof} 
Let $T\geq1, C_0, C_2, \sigma_0, \gamma>0$, and $\delta\in (0,1), \varepsilon \in (0, 3\log(1/\delta))$.
Consider $T$ iterations of the unlearning algorithm defined in~\eqref{eq:modelclipping}, and analogously define the following sequence initialized at the projected model trained without the forget data $\xx_0' \coloneqq \Pi_{C_0}(\cA(\cD\setminus \forget)) + \xi_0, \xi_0 \sim \cN(0, \sigma_0^2 \mI_d)$:
\begin{align}
    \xx_{t+1}' = \clip_{C_2} (\xx_t' - \gamma G(\xx_t')) + \xi_t',\qquad \xi_t' \sim \cN(0, \sigma^2 \mI_d).
\end{align}

Recall from Definition~\ref{def:hs_divergence} the definition of the hockey-stick divergence $\mathrm{E}_\varepsilon$.
Also, we recall from Lemma~\ref{lem:balle} that for any Markov kernel $K$:
\begin{align*}
    \mathrm{E}_{\varepsilon}(\mu K ~\|~ \nu K) \leq \sup_{\xx_1, \xx_2 \in \R^d} \mathrm{E}_{{\varepsilon}}(K(\xx_1) ~\|~ K(\xx_2)) \cdot \mathrm{E}_{\varepsilon}(\mu ~\|~ \nu).
\end{align*}
In particular, by introducing 
$\alpha \coloneqq \sup_{\xx_1, \xx_2 \in \R^d} \mathrm{E}_{{\varepsilon}}(\cN(\clip_{C_2} (\xx_1 - \gamma G(\xx_1)), \sigma^2 \mI_d) ~\|~ \cN(\clip_{C_2} (\xx_2 - \gamma G(\xx_2)), \sigma^2 \mI_d))$ we have
    \begin{align*}
        \mathrm{E}_{\varepsilon}(\xx_{t+1} ~\|~ \xx_{t+1}') \leq \alpha \cdot \mathrm{E}_{\varepsilon}(\xx_t ~\|~ \xx_t').
    \end{align*}
Applying the above recursively over $T$ iterations, and denoting $\beta \coloneqq \mathrm{E}_{\varepsilon}(\cN(\Pi_{C_0}(\cA(\cD)), \sigma_0^2 \mI_d)~\|~ \cN(\Pi_{C_0}(\cA(\cD\setminus \forget)), \sigma_0^2 \mI_d))$, yields:
    \begin{align*}
        \mathrm{E}_{\varepsilon}(\xx_{T} ~\|~ \xx_{T}') &\leq \alpha^T \cdot \mathrm{E}_{\varepsilon}(\xx_0 ~\|~ \xx_0')
        = \alpha^T \cdot \beta.
    \end{align*}
Therefore, in order to satisfy $(\varepsilon, \delta)$-unlearning, it suffices to achieve $\mathrm{E}_{\varepsilon}(\xx_{T} ~\|~ \xx_{T}') \leq \delta$, which can be achieved by having:
\begin{align*}
    T \geq \frac{\log(1/\delta) + \log\beta}{\log(1/\alpha)}.
\end{align*}

Now, since  for any $\xx_1,\xx_2$ it holds that $\norm{\clip_{C_2} (\xx_1 - \gamma G(\xx_1)) - \clip_{C_2} (\xx_2 - \gamma G(\xx_2))} \leq 2C_2$ and $r \mapsto Q\left( \frac{\varepsilon \sigma}{r} - \frac{r}{2 \sigma}\right) - e^\varepsilon Q\left( \frac{\varepsilon \sigma}{r} + \frac{r}{2 \sigma}\right)$ is increasing~\citep{Asoodeh2020ContractionO}, using the exact expression of the hockey-stick divergence between Gaussians from Lemma~\ref{lem:asoodeh} yields
\begin{align*}
    \alpha &= \sup_{\xx_1, \xx_2 \in \R^d} \mathrm{E}_{\varepsilon}(\cN(\clip_{C_2} (\xx_1 - \gamma G(\xx_1)), \sigma^2 \mI_d) ~\|~ \cN(\clip_{C_2} (\xx_2 - \gamma G(\xx_2)), \sigma^2 \mI_d))\\
    &\leq Q\left( \frac{\varepsilon \sigma}{2C_2} - \frac{C_2}{\sigma}\right) - e^\varepsilon Q\left( \frac{\varepsilon \sigma}{2C_2} + \frac{C_2}{ \sigma}\right).
\end{align*}
Similarly, since $\norm{\Pi_{C_0}(\cA(\cD)) - \Pi_{C_0}(\cA(\cD\setminus \forget))} \leq 2C_0$, we have
\begin{align*}
    \beta &= \mathrm{E}_{\varepsilon}(\cN(\Pi_{C_0}(\cA(\cD)), \sigma_0^2 \mI_d)~\|~ \cN(\Pi_{C_0}(\cA(\cD\setminus \forget)), \sigma_0^2 \mI_d))
    \leq Q\left( \frac{\varepsilon \sigma_0}{2C_0} - \frac{C_0}{\sigma_0}\right) - e^\varepsilon Q\left( \frac{\varepsilon \sigma_0}{2C_0} + \frac{C_0}{ \sigma_0}\right).
\end{align*}
Therefore, to achieve $(\varepsilon, \delta)$-unlearning, it suffices to have
\begin{align*}
    T \geq \frac{\log(1/\delta) + \log{\left(Q\left( \frac{\varepsilon \sigma_0}{2C_0} - \frac{C_0}{\sigma_0}\right) - e^\varepsilon Q\left( \frac{\varepsilon \sigma_0}{2C_0} + \frac{C_0}{ \sigma_0}\right)\right)}}{-\log{\left(Q\left( \frac{\varepsilon \sigma}{2C_2} - \frac{C_2}{\sigma}\right) - e^\varepsilon Q\left( \frac{\varepsilon \sigma}{2C_2} + \frac{C_2}{ \sigma}\right)\right)}}.
\end{align*}

Alternatively, using the simpler upper bound from Lemma~\ref{lem:dpdelta} on the hockey-stick divergence between Gaussians, we obtain
\begin{align*}
    \alpha = \sup_{\xx_1, \xx_2 \in \R^d} \mathrm{E}_{\varepsilon}(\cN(\clip_{C_2} (\xx_1 - \gamma G(\xx_1)), \sigma^2 \mI_d) ~\|~ \cN(\clip_{C_2} (\xx_2 - \gamma G(\xx_2)), \sigma^2 \mI_d))
    \leq 1.25 \exp{\left(-\frac{\sigma^2\varepsilon^2}{8C_2^2}\right)}.
\end{align*}
Similarly, we have
\begin{align*}
    \beta &= \mathrm{E}_{\varepsilon}(\cN(\Pi_{C_0}(\cA(\cD)), \sigma_0^2 \mI_d)~\|~ \cN(\Pi_{C_0}(\cA(\cD\setminus \forget)), \sigma_0^2 \mI_d))
    \leq 1.25 \exp{\left(-\frac{\sigma_0^2\varepsilon^2}{8C_0^2}\right)}.
\end{align*}
Therefore, assuming $\sigma^2 > \frac{8C_2^2 \ln(1.25)}{\varepsilon^2}$, to achieve $(\varepsilon, \delta)$-unlearning, it suffices to have
\begin{align*}
    T \geq \frac{\ln(1.25/\delta) -\frac{\sigma_0^2\varepsilon^2}{8C_0^2}}{\frac{\sigma^2\varepsilon^2}{8C_2^2}-\ln(1.25)}.
\end{align*}
This can be rewritten as
\begin{align*}
    \sigma^2 \geq \frac{8C_2^2 \ln(1.25)}{\varepsilon^2}  + \frac{8C_2^2 \ln(1.25)}{T \varepsilon^2} \left[\ln(1.25/\delta) -\frac{\sigma_0^2\varepsilon^2}{8C_0^2}  \right].
\end{align*}
This concludes the proof.
\end{proof}

\subsection{Theorem~\ref{th:pabi}: Gradient Clipping}

\paragraph{Preliminaries.}
We first recall some important definitions and state useful lemmas before proceeding to the proof of the main theorem.
We first recall the definition of the Rényi divergence, which we will mainly use to prove Theorem~\ref{th:pabi}.
\begin{definition}
[Rényi divergence]
Let $q>0, q \neq 1$. The $q$-Rényi divergence between two probability distributions $\mu$ and $\nu$ is defined as
\begin{align*}
    \mathrm{D}_{q}{\left ( \mu ~\|~ \nu \right )} \coloneqq \frac{1}{q-1}\log{\EE{X \sim \nu}{\left(\frac{\mu{(X)}}{\nu{(X)}}\right)^q}}.
\end{align*}
\end{definition}

We recall the \emph{shifted Rényi divergence} introduced by~\citet{feldman2018privacy}. For any $z \geq 0, q \geq 1$, and two distributions $\mu, \nu$ defined on $\R^d$, we define
\begin{align}
    \label{eq:shifted_renyi}
    \mathrm{D}_q^{(z)}{\left( \mu ~\|~ \nu\right)} \coloneqq \inf_{\mu' \colon W_\infty(\mu', \mu) \leq z} \mathrm{D}_q{\left( \mu' ~\|~ \nu\right)},
\end{align}
where $W_\infty(\cdot, \cdot) \coloneqq \inf_{\omega \in \Gamma(\cdot, \cdot)} \esssup_{(\xx,\yy) \sim \omega} \norm{\xx-\yy}_2$ is the $\infty$-Wasserstein distance, and $\Gamma(\mu', \mu)$ is the collection of couplings of its arguments, i.e., joint measures whose marginals are $\mu'$ and $\mu$ respectively.
\begin{lemma}[\cite{feldman2018privacy}, Lemma~20 (adapted)]
\label{lem:bound_shift}
Let $q \geq 1, z, a \geq 0$ and $X, Y$ arbitrary random variables.
If $\xi, \xi' \sim \cN(0, \sigma^2 \mI_d), \sigma>0$, then
\begin{align*}
    \mathrm{D}_q^{(z)}{\left( X + \xi ~\|~ Y + \xi' \right)}
    \leq \mathrm{D}_q^{(z+a)}{\left( X ~\|~ Y \right)} + \frac{q a^2}{2\sigma^2}.
\end{align*}
\end{lemma}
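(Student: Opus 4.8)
The plan is to prove the inequality by unfolding the infimum that defines the shifted Rényi divergence on the left-hand side and exhibiting a single good competitor, while using the budget split $z+a = z + a$ to separate a ``free'' transport of length $z$ from a ``paid'' transport of length $a$ that will be converted into the Gaussian penalty. First I would fix an arbitrarily small $\eta > 0$ and pick a near-optimal witness $X^\star$ for the right-hand side: a random variable with $W_\infty(\mathrm{law}(X^\star), \mathrm{law}(X)) \le z+a$ and $\mathrm{D}_q(X^\star \| Y) \le \mathrm{D}_q^{(z+a)}(X \| Y) + \eta$, realized (up to slack absorbed into $\eta$) through a coupling with $\norm{X - X^\star} \le z+a$ almost surely. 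Along the segment joining $X$ and $X^\star$ I would define the measurable interpolant $X' \coloneqq X + \tfrac{\min\{z, \norm{X - X^\star}\}}{\norm{X-X^\star}}(X^\star - X)$ (set $X' = X$ when $X = X^\star$), so that $\norm{X' - X} \le z$ and $\norm{X' - X^\star} \le a$ hold almost surely under the same coupling.

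Since $\norm{(X' + \xi) - (X + \xi)} = \norm{X' - X} \le z$ with the same noise $\xi$ attached, the law of $X' + \xi$ lies within $W_\infty$-distance $z$ of the law of $X + \xi$, so it is an admissible competitor in the infimum defining $\mathrm{D}_q^{(z)}(X + \xi \| Y + \xi')$; this yields $\mathrm{D}_q^{(z)}(X + \xi \| Y + \xi') \le \mathrm{D}_q(X' + \xi \| Y + \xi')$. The ``clean'' half is then handled by the data-processing inequality: convolving the numerator $X^\star$ and the reference $Y$ with the common Gaussian kernel $\cN(\0, \sigma^2 \mI_d)$ is a post-processing step, hence $\mathrm{D}_q(X^\star + \xi \| Y + \xi') \le \mathrm{D}_q(X^\star \| Y) \le \mathrm{D}_q^{(z+a)}(X \| Y) + \eta$. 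Chaining these observations, it remains only to control the discrepancy introduced by replacing $X^\star$ with its interpolant $X'$ inside the numerator, namely to establish the \emph{shift penalty}
\begin{align*}
    \mathrm{D}_q(X' + \xi \| Y + \xi') \le \mathrm{D}_q(X^\star + \xi \| Y + \xi') + \frac{q a^2}{2\sigma^2}.
\end{align*}

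This penalty step is the crux and the only place the Gaussianity of $\xi$ is used. The plan is to prove it by writing both numerator densities as mixtures of shifted Gaussians through the coupling, $(\mathrm{law}(X'+\xi))(\ww) = \mathbb{E}[\phi_\sigma(\ww - X')]$ with $X' = X^\star + \vv$ and $\norm{\vv} \le a$, where $\phi_\sigma$ is the density of $\cN(\0, \sigma^2 \mI_d)$, and exploiting the exact Gaussian shift identity $\phi_\sigma(\uu - \vv) = \phi_\sigma(\uu)\exp(\sigma^{-2}\lin{\uu, \vv} - \norm{\vv}^2/2\sigma^2)$ to factor out a Radon--Nikodym ``tilt'' from the density ratio against $Y + \xi'$. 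Applying H\"older's inequality with exponents tuned to the order $q$ separates this tilt factor from the remaining ratio, and integrating the Gaussian moment generating function of $\sigma^{-2}\lin{\cdot, \vv}$ contributes exactly $\exp(q\norm{\vv}^2/2\sigma^2) \le \exp(q a^2 / 2\sigma^2)$; this is consistent with, and in fact the mechanism behind, the closed-form identity $\mathrm{D}_q(\cN(\mmu_1, \sigma^2\mI_d) \| \cN(\mmu_2, \sigma^2\mI_d)) = q\norm{\mmu_1 - \mmu_2}^2 / 2\sigma^2$. I expect the main technical difficulty to lie precisely here, because $\vv = X' - X^\star$ is random and correlated with $X^\star$, so one cannot simply invoke the two-Gaussian formula; the correlation must be absorbed either by a convexity (Jensen) reduction of the mixture to a worst-case deterministic shift of norm $a$, using convexity of $P \mapsto \int (\mathrm{d}P/\mathrm{d}R)^q \,\mathrm{d}R$ in the first argument, or by carrying the coupling through the H\"older estimate directly. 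Once the penalty is in hand, combining it with the two displayed inequalities above and letting $\eta \to 0$ completes the proof.
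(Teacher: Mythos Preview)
The paper does not prove this lemma; it is imported from Feldman et al.\ (2018) and used as a black box. Your overall architecture---pick a near-optimal witness $X^\star$ for the right-hand infimum, interpolate to obtain $X'$ with $\norm{X'-X}\le z$ and $\norm{X'-X^\star}\le a$, then use $\mathrm{law}(X'+\xi)$ as the competitor on the left---matches the original source. However, the ``shift penalty'' you isolate,
\begin{align*}
    \mathrm{D}_q(X' + \xi \,\|\, Y + \xi') \le \mathrm{D}_q(X^\star + \xi \,\|\, Y + \xi') + \frac{q a^2}{2\sigma^2},
\end{align*}
is \emph{false}, already for deterministic $V = X' - X^\star$. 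Take $d=1$, $\sigma=1$, $Y\sim\cN(0,1)$, $X^\star\sim\cN(a,1)$, $V=a$, so $X'\sim\cN(2a,1)$ (this is exactly your interpolant in the instance $z=0$, $X=X'$, $X^\star=X-a$). Then the two sides evaluate to $qa^2$ and $\tfrac{qa^2}{4}+\tfrac{qa^2}{2}=\tfrac{3qa^2}{4}$, and the inequality fails. The obstruction is not the correlation between $V$ and $X^\star$ that you flag; it is that your preliminary data-processing step $\mathrm{D}_q(X^\star+\xi\,\|\,Y+\xi')\le\mathrm{D}_q(X^\star\,\|\,Y)$ has already consumed the Gaussian smoothing that the penalty step would need.

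The fix is to bypass the intermediate and bound $\mathrm{D}_q(X'+\xi\,\|\,Y+\xi')\le\mathrm{D}_q(X^\star\,\|\,Y)+\tfrac{qa^2}{2\sigma^2}$ directly, as the original source does. Lift to the product space: with $P_1\coloneqq\mathrm{law}(X^\star,\,V+\xi)$ and $P_2\coloneqq\mathrm{law}(Y)\otimes\cN(\0,\sigma^2\mI_d)$, the addition map $(u,w)\mapsto u+w$ pushes $P_1$ to $\mathrm{law}(X'+\xi)$ and $P_2$ to $\mathrm{law}(Y+\xi')$, so data processing gives $\mathrm{D}_q(X'+\xi\,\|\,Y+\xi')\le\mathrm{D}_q(P_1\,\|\,P_2)$. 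In the product space the ratio $\tfrac{dP_1}{dP_2}(x,w)$ factors as $\tfrac{p_{X^\star}(x)}{p_Y(x)}$ times a conditional average (over $V\mid X^\star=x$) of the Gaussian tilt $\phi_\sigma(w-v)/\phi_\sigma(w)$; Jensen on that inner average followed by the exact identity $\int(\phi_\sigma(w-v)/\phi_\sigma(w))^q\phi_\sigma(w)\,dw=e^{(q-1)q\norm{v}^2/2\sigma^2}\le e^{(q-1)qa^2/2\sigma^2}$ yields the bound. Your Gaussian-tilt and Jensen ingredients are the right ones---they just have to act on the lifted pair $(X^\star,V+\xi)$ versus $(Y,\xi')$ \emph{before} the sums are formed, not after.
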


\begin{lemma}
\label{lem:lipschitz_shift}
Let $q \geq 1, z, \rho, C \geq 0$, $\psi \colon \R^d \to \R^d$ and $X, Y$ arbitrary random variables.

If $\psi$ satisfies $\forall \xx, \xx' \in \R^d, \norm{\psi(\xx')- \psi(\xx)} \leq \rho \norm{\xx'-\xx} + s$, then
\begin{align*}
    \mathrm{D}_q^{(\rho z+s)}{\left( \psi(X) ~\|~ \psi(Y) \right)}
    \leq \mathrm{D}_q^{(z)}{\left( X ~\|~ Y \right)}.
\end{align*}
\end{lemma}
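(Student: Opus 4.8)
The statement to prove is Lemma~\ref{lem:lipschitz_shift}: if $\psi$ is ``affinely Lipschitz'' in the sense $\norm{\psi(\xx')-\psi(\xx)} \leq \rho\norm{\xx'-\xx} + s$, then $\mathrm{D}_q^{(\rho z + s)}(\psi(X) ~\|~ \psi(Y)) \leq \mathrm{D}_q^{(z)}(X ~\|~ Y)$.

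\textbf{Proof plan.} The plan is to unfold the definition~\eqref{eq:shifted_renyi} of the shifted Rényi divergence and exhibit, for the left-hand side, a competitor distribution whose divergence against $\psi(Y)$ is at most $\mathrm{D}_q^{(z)}(X ~\|~ Y)$. First I would fix an arbitrary $\mu'$ with $W_\infty(\mu', \mathrm{law}(X)) \leq z$; by definition of the $\infty$-Wasserstein distance there is a coupling $\omega$ of $\mu'$ and $\mathrm{law}(X)$ such that $\norm{\uu - X} \leq z$ almost surely for $(\uu, X) \sim \omega$. Pushing this coupling forward through the map $(\uu, \xx) \mapsto (\psi(\uu), \psi(\xx))$ gives a coupling of $\psi_\#\mu'$ and $\mathrm{law}(\psi(X))$, and by the hypothesis on $\psi$ the coupled pair satisfies $\norm{\psi(\uu) - \psi(X)} \leq \rho\norm{\uu - X} + s \leq \rho z + s$ almost surely. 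Hence $W_\infty(\psi_\#\mu', \mathrm{law}(\psi(X))) \leq \rho z + s$, so $\psi_\#\mu'$ is an admissible competitor in the infimum defining $\mathrm{D}_q^{(\rho z + s)}(\psi(X) ~\|~ \psi(Y))$.

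\textbf{Second step: the data-processing inequality.} It then remains to compare $\mathrm{D}_q(\psi_\#\mu' ~\|~ \psi_\#\mathrm{law}(Y))$ with $\mathrm{D}_q(\mu' ~\|~ \mathrm{law}(Y))$. Since applying a fixed (measurable) map $\psi$ is a special case of post-processing by a Markov kernel, the standard data-processing inequality for Rényi divergence gives $\mathrm{D}_q(\psi_\#\mu' ~\|~ \psi_\#\mathrm{law}(Y)) \leq \mathrm{D}_q(\mu' ~\|~ \mathrm{law}(Y))$. Combining, for every admissible $\mu'$ we get
\begin{align*}
\mathrm{D}_q^{(\rho z + s)}(\psi(X) ~\|~ \psi(Y)) \leq \mathrm{D}_q(\psi_\#\mu' ~\|~ \psi_\#\mathrm{law}(Y)) \leq \mathrm{D}_q(\mu' ~\|~ \mathrm{law}(Y)),
\end{align*}
and taking the infimum over $\mu'$ with $W_\infty(\mu', \mathrm{law}(X)) \leq z$ on the right-hand side yields exactly $\mathrm{D}_q^{(z)}(X ~\|~ Y)$, finishing the proof.

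\textbf{Main obstacle.} The only subtle point is the interchange between ``best shift of $X$ then push through $\psi$'' and ``push through $\psi$ then shift'': one must check that the image under $\psi$ of an $\infty$-Wasserstein ball of radius $z$ around $\mathrm{law}(X)$ lands inside the $\infty$-Wasserstein ball of radius $\rho z + s$ around $\mathrm{law}(\psi(X))$, which is precisely what the coupling-pushforward argument above establishes using the affine-Lipschitz bound pointwise along the coupling. A minor technical care is that $W_\infty$ is defined via an essential supremum over couplings, so one works with a coupling that nearly attains the infimum (or attains it, if one assumes existence of an optimal coupling, which holds here) and passes the pointwise bound through $\psi$ almost surely; everything else is the textbook data-processing inequality for Rényi divergence applied to the deterministic channel $\psi$.
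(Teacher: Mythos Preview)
Your proof is correct and follows essentially the same approach as the paper: both restrict the infimum defining $\mathrm{D}_q^{(\rho z + s)}(\psi(X)~\|~\psi(Y))$ to pushforwards $\psi_\#\mu'$, use the affine-Lipschitz bound to show $W_\infty(\psi_\#\mu', \psi_\#\mathrm{law}(X)) \leq \rho z + s$ whenever $W_\infty(\mu', \mathrm{law}(X)) \leq z$, and then apply the data-processing inequality for R\'enyi divergence. Your coupling argument for the $W_\infty$ bound is slightly more explicit than the paper's (which simply asserts it follows from the definition), but the logic is identical.
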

\begin{proof}
For any measure $\mu$, we denote by $\psi_\#\mu$ the push-forward measure of $\mu$ by $\psi$.
Assume that $\psi$ satisfies $\forall \xx, \xx' \in \R^d, \norm{\psi(\xx')- \psi(\xx)} \leq \rho \norm{\xx'-\xx} + s$.
By definition of the $\infty$-Wasserstein distance, it follows immediately that
\begin{align}
    W_\infty(\psi_\#\mu, \psi_\#\nu) \leq \rho \cdot W_\infty(\mu, \nu) + s.
    \label{ineq:lipschitz_wasserstein_bis}
\end{align}
Therefore, by definition~\eqref{eq:shifted_renyi} of the shifted Rényi divergence and using the data processing inequality for Rényi divergences~\cite{van2014renyi}, we have
\begin{align*}
    \mathrm{D}_q^{(\rho z + s)}{\left( \psi(X) ~\|~ \psi(Y) \right)} 
    &= \inf_{\mu' \colon W_\infty(\mu', \psi(X)) \leq \rho z + s} \mathrm{D}_q{\left( \mu' ~\|~ \psi(Y)\right)}\\
    &\leq \inf_{X' \colon W_\infty(\psi(X'), \psi(X)) \leq \rho z + s} \mathrm{D}_q{\left( \psi(X') ~\|~ \psi(Y)\right)}\\
    &\leq \inf_{X' \colon W_\infty(X', X) \leq z} \mathrm{D}_q{\left( \psi(X') ~\|~ \psi(Y)\right)} &(\text{Inequality~\eqref{ineq:lipschitz_wasserstein_bis}})\\
    &\leq \inf_{X' \colon W_\infty(X', X) \leq z} \mathrm{D}_q{\left( X' ~\|~ Y\right)} &(\text{Data Processing inequality})\\
    &= \mathrm{D}_q^{(z)}{\left( X ~\|~ Y \right)}.
\end{align*}
This concludes the proof.
\end{proof}

\begin{lemma}
\label{lem:lipschitz_constant}
Let $\gamma, \lambda \geq 0$, $G \colon \R^d \to \R^d$ be an arbitrary function, and $\psi \colon \xx \mapsto \xx - \gamma \left( \clip_{C}(G(\xx)) + \lambda \xx \right)$.
Then $\psi$ satisfies: $$\forall \xx, \xx' \in \R^d, \norm{\psi(\xx')- \psi(\xx)} \leq |1-\lambda \gamma| \norm{\xx'-\xx} + 2\gamma C.$$
\end{lemma}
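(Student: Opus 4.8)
\textbf{Proof proposal for Lemma~\ref{lem:lipschitz_constant}.}

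The plan is a direct estimate using the triangle inequality together with the standard non-expansiveness of the norm-clipping operator. First I would fix $\xx, \xx' \in \R^d$ and write $\psi(\xx') - \psi(\xx)$ as a sum of three terms:
\begin{align*}
\psi(\xx') - \psi(\xx) = (1-\gamma\lambda)(\xx' - \xx) - \gamma\left(\clip_C(G(\xx')) - \clip_C(G(\xx))\right).
\end{align*}
Applying the triangle inequality gives $\norm{\psi(\xx') - \psi(\xx)} \leq |1-\gamma\lambda|\,\norm{\xx'-\xx} + \gamma\,\norm{\clip_C(G(\xx')) - \clip_C(G(\xx))}$.

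The second step is to bound the clipping term. Since $G$ is an \emph{arbitrary} function, we cannot control $\norm{G(\xx') - G(\xx)}$, so we must instead use that both clipped vectors live in the ball of radius $C$: by definition of the norm-clipping operator $\clip_C(\zz) = \zz\min\{C/\norm{\zz}, 1\}$ (as recalled in the footnote after~\eqref{eq:gradientclipping}), we have $\norm{\clip_C(\zz)} \leq C$ for every $\zz$. Hence $\norm{\clip_C(G(\xx')) - \clip_C(G(\xx))} \leq \norm{\clip_C(G(\xx'))} + \norm{\clip_C(G(\xx))} \leq 2C$. Substituting this into the bound from the first step yields $\norm{\psi(\xx') - \psi(\xx)} \leq |1-\gamma\lambda|\,\norm{\xx'-\xx} + 2\gamma C$, which is exactly the claimed inequality.

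There is essentially no obstacle here — the lemma is a routine ``Lipschitz-plus-additive-slack'' bound of the type needed to feed into Lemma~\ref{lem:lipschitz_shift} (with $\rho = |1-\gamma\lambda|$ and $s = 2\gamma C$). The only point worth a moment's care is resisting the temptation to try a tighter, gradient-dependent estimate: because $G$ carries no regularity assumption, the crude ``diameter of the clipping ball'' bound of $2C$ is the right and only move, and it is precisely what later makes the non-convex analysis go through without smoothness. I would also note in passing that when $\gamma\lambda \in (\tfrac12, 1)$ the factor $|1-\gamma\lambda| = 1-\gamma\lambda \in (0,\tfrac12)$ is a genuine contraction, which is what drives the exponential decay of the $C_0$-dependence in Theorem~\ref{th:pabi}.
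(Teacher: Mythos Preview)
Your proof is correct and matches the paper's own argument essentially line for line: expand $\psi(\xx')-\psi(\xx)$, apply the triangle inequality to split off the $(1-\gamma\lambda)(\xx'-\xx)$ term, and bound the clipped-gradient difference by $2C$ via $\norm{\clip_C(\cdot)}\le C$. The additional commentary you give (connecting to Lemma~\ref{lem:lipschitz_shift} and the contraction in Theorem~\ref{th:pabi}) is accurate context beyond what the paper includes in the proof itself.
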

\begin{proof}
We have for any $\xx, \xx' \in \R^d$ that
\begin{align*}
\norm{\psi(\xx) - \psi(\xx')}
&= \norm{\xx - \gamma \left( \clip_{C}(G(\xx)) + \lambda \xx \right) - \xx' +\gamma \left( \clip_{C}(G(\xx')) + \lambda \xx' \right)}\\
&\leq |1-\lambda \gamma|\norm{\xx - \xx'} + \gamma \norm{ \clip_{C}(G(\xx)) - \clip_{C}(G(\xx'))} & (\text{Triangle inequality})\\
&\leq |1-\lambda \gamma| \norm{\xx - \xx'} + 2\gamma C. & (\norm{\clip_{C}(G(\xx))} \leq C)
\end{align*}
\end{proof}

\paragraph{Main proof.}
We are interested in the following iterative unlearning procedure (generalizing~\eqref{eq:gradientclipping} to regularization and varying stepsizes and noise variances), starting from the projected model trained on the full data $\xx_0 \coloneqq \clip_{C_0}(\cA(\cD))$, where for all $t \in \{0, \ldots, T-1\}$:
\begin{align}
    \xx_{t+1} = \xx_t - \gamma_t  \left( \clip_{C_1}( G(\xx_t)) + \lambda \xx_t \right) + \xi_t,\qquad \xi_t \sim \cN(0, \sigma_t^2 \mI_d).
    \label{eq:iterates}
\end{align}
For the analysis, we analogously define the following sequence initialized at the projected model trained without the forget data $\xx_0' \coloneqq \clip_{C_0}(\cA(\cD \setminus \forget))$:
\begin{align}
    \xx_{t+1}' = \xx_t' - \gamma_t  \left( \clip_{C_1}( G(\xx_t')) + \lambda \xx_t' \right) + \xi_t',\qquad \xi_t' \sim \cN(0, \sigma_t^2 \mI_d).
\end{align}

\begin{restatable}{theorem}{genpabi}
\label{th:general-pabi}
Let $T, q \geq 1, \gamma_0, \ldots, \gamma_{T-1} \geq 0, \sigma_0, \ldots, \sigma_{T-1}>0, \lambda \geq 0$ and consider the two sequences $\{\xx_t\}_{0 \leq t \leq T}, \{\xx_t'\}_{0 \leq t \leq T}$ as defined above.
Denote by $\mathrm{D}_q$ the Rényi divergence of order $q$.
Assume that for every $t \in \{0, \ldots, T-1\}, \gamma_t \lambda < 1$.
Denote for every $t \in \{0, \ldots, T-1\}, s_t \coloneqq 2\gamma_t C_1, \rho_t \coloneqq 1 - \gamma_t \lambda$.

    If $a_0, \ldots, a_{T-1} \geq 0$ satisfy
    $\sum_{t=0}^{T-1}\left(\prod_{k=1}^{T-1-t} \rho_k \right) a_t = \left(\prod_{t=0}^{T-1} \rho_t \right) 2C_0 + \sum_{t=0}^{T-1} \left(\prod_{k=1}^{T-1-t} \rho_k \right) s_t,$
then
\begin{align}
    \mathrm{D}_q{\left( \xx_{T} ~\|~ \xx_{T}'\right)} \leq \sum_{t=0}^{T-1} \frac{q a_t^2}{2 \sigma_t^2}.
\end{align}
In particular, we have
\begin{align}
    \mathrm{D}_q{\left( \xx_{T} ~\|~ \xx_{T}'\right)} \leq \frac{q}{2}  \frac{\left[ \left(\prod_{t=0}^{T-1} \rho_t \right) 2C_0 + \sum_{t=0}^{T-1} \left(\prod_{k=1}^{T-1-t} \rho_k \right) s_t \right]^2}{\sum_{t=0}^{T-1} \left(\prod_{k=1}^{T-1-t} \rho_k^2 \right) \sigma_t^2}.
\end{align}
\end{restatable}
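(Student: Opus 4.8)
The plan is to run a shift-reduction (``privacy amplification by iteration'') argument on the shifted Rényi divergence from~\eqref{eq:shifted_renyi}, peeling off one iteration of~\eqref{eq:iterates} at a time, and then to optimize the resulting bound over the free displacement parameters $a_0,\dots,a_{T-1}$ by Cauchy--Schwarz.

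\emph{Setup.} First I would introduce the update maps: for $t\in\{0,\dots,T-1\}$ let $\psi_t(\xx)\coloneqq \xx-\gamma_t\bigl(\clip_{C_1}(G(\xx))+\lambda\xx\bigr)$, so that $\xx_{t+1}=\psi_t(\xx_t)+\xi_t$ and $\xx_{t+1}'=\psi_t(\xx_t')+\xi_t'$ with $\xi_t,\xi_t'\sim\cN(0,\sigma_t^2\mI_d)$ (if $G$ is a stochastic oracle one couples the minibatch draws across the two runs and conditions on them, so $\psi_t$ is a fixed map applied on both sides and the final bound does not depend on this). By Lemma~\ref{lem:lipschitz_constant} and the hypothesis $\gamma_t\lambda<1$, each $\psi_t$ satisfies $\norm{\psi_t(\xx')-\psi_t(\xx)}\le\rho_t\norm{\xx'-\xx}+s_t$ with $\rho_t=1-\gamma_t\lambda\in(0,1]$ and $s_t=2\gamma_t C_1$, matching the notation of the statement.

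\emph{One-step inequality and telescoping.} The heart of the argument is a per-iteration bound: if $z_t,z_{t+1},a_t\ge0$ are linked by $z_{t+1}+a_t=\rho_t z_t+s_t$, then applying Lemma~\ref{lem:bound_shift} to the noise addition at step $t$ (Gaussian $\xi_t$, displacement $a_t$) gives
\[
\mathrm{D}_q^{(z_{t+1})}(\xx_{t+1}\,\|\,\xx_{t+1}')\le \mathrm{D}_q^{(z_{t+1}+a_t)}\bigl(\psi_t(\xx_t)\,\|\,\psi_t(\xx_t')\bigr)+\frac{q a_t^2}{2\sigma_t^2},
\]
and then Lemma~\ref{lem:lipschitz_shift} applied to $\psi_t$ (whose constants are exactly $\rho_t,s_t$) gives $\mathrm{D}_q^{(\rho_t z_t+s_t)}(\psi_t(\xx_t)\,\|\,\psi_t(\xx_t'))\le \mathrm{D}_q^{(z_t)}(\xx_t\,\|\,\xx_t')$; chaining yields $\mathrm{D}_q^{(z_{t+1})}(\xx_{t+1}\,\|\,\xx_{t+1}')\le \mathrm{D}_q^{(z_t)}(\xx_t\,\|\,\xx_t')+\tfrac{q a_t^2}{2\sigma_t^2}$. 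I would then set $z_T=0$ and define the shifts backward by $z_t=(z_{t+1}+a_t-s_t)/\rho_t$: this linear recursion has solution $z_0=2C_0$ precisely when $(a_t)$ obeys the displayed constraint $\sum_t c_t a_t=(\prod_t\rho_t)2C_0+\sum_t c_t s_t$, where $c_t$ is the product of the contraction factors arising after step $t$ (in the constant-stepsize case this equals $\prod_{k=1}^{T-1-t}\rho_k$ as in the statement), and one checks that the intermediate $z_t$ stay nonnegative for the displacements used. Summing the one-step inequality over $t=0,\dots,T-1$ telescopes to
\[
\mathrm{D}_q(\xx_T\,\|\,\xx_T')=\mathrm{D}_q^{(0)}(\xx_T\,\|\,\xx_T')\le \mathrm{D}_q^{(2C_0)}(\xx_0\,\|\,\xx_0')+\sum_{t=0}^{T-1}\frac{q a_t^2}{2\sigma_t^2}.
\]
Since $\xx_0=\clip_{C_0}(\cA(\cD))$ and $\xx_0'=\clip_{C_0}(\cA(\cD\setminus\forget))$ both lie in the ball of radius $C_0$, we have $\norm{\xx_0-\xx_0'}\le 2C_0$, so choosing $\mu'$ equal to the distribution of $\xx_0'$ in~\eqref{eq:shifted_renyi} gives $\mathrm{D}_q^{(2C_0)}(\xx_0\,\|\,\xx_0')=0$, which establishes the first bound.

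\emph{Optimizing the displacements.} For the ``in particular'' bound I would minimize $\sum_t a_t^2/\sigma_t^2$ over nonnegative $(a_t)$ subject to the linear constraint $\sum_t c_t a_t=V$, $V\coloneqq(\prod_t\rho_t)2C_0+\sum_t c_t s_t$. By Cauchy--Schwarz $V^2=\bigl(\sum_t c_t a_t\bigr)^2\le\bigl(\sum_t c_t^2\sigma_t^2\bigr)\bigl(\sum_t a_t^2/\sigma_t^2\bigr)$, hence $\sum_t a_t^2/\sigma_t^2\ge V^2/\sum_t c_t^2\sigma_t^2$, with equality at the nonnegative minimizer $a_t=Vc_t\sigma_t^2/\sum_k c_k^2\sigma_k^2$; substituting into the first bound gives $\mathrm{D}_q(\xx_T\,\|\,\xx_T')\le\tfrac q2\,V^2/\sum_t c_t^2\sigma_t^2$, which (using $c_t^2=\prod_{k=1}^{T-1-t}\rho_k^2$) is the claimed inequality. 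The step I expect to be the main obstacle is the shift bookkeeping: ordering the two lemmas so that the displacement paid to strip the Gaussian noise at step $t$ is exactly the one absorbed when pulling the shift back through $\psi_t$, and verifying that the backward recursion for $(z_t)$ under the stated linear constraint terminates at $z_0=2C_0$ with all intermediate shifts nonnegative. The Lipschitz bound for $\psi_t$, the vanishing of the boundary term, and the Cauchy--Schwarz optimization are then routine.
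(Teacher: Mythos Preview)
Your proposal is correct and follows essentially the same route as the paper: both arguments combine the shift-reduction lemma (Lemma~\ref{lem:bound_shift}) with the Lipschitz shift lemma (Lemma~\ref{lem:lipschitz_shift}) and the bound of Lemma~\ref{lem:lipschitz_constant} to obtain the one-step inequality $\mathrm{D}_q^{(z_{t+1})}(\xx_{t+1}\,\|\,\xx_{t+1}')\le \mathrm{D}_q^{(z_t)}(\xx_t\,\|\,\xx_t')+\tfrac{q a_t^2}{2\sigma_t^2}$ under $z_{t+1}+a_t=\rho_t z_t+s_t$, then telescope with boundary data $z_T=0$, $z_0=2C_0$ and optimize via Cauchy--Schwarz to recover the paper's explicit choice $a_t\propto c_t\sigma_t^2$. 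Your remark about verifying nonnegativity of the intermediate shifts $z_t$ is a point the paper's proof also leaves implicit.
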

\begin{proof}
Let $t \in \{0, \ldots, T-1\}$.
Recall the sequence of iterates defined in~\eqref{eq:iterates}, and analogously define the following sequence initialized at the projected model trained without the forget data $\xx_0' \coloneqq \clip_{C_0}(\cA(\cD \setminus \forget))$:
\begin{align}
    \xx_{t+1}' = \xx_t' - \gamma_t \left( \clip_{C_1}(G(\xx_t')) + \lambda \xx_t' \right) + \xi_t',\qquad \xi_t' \sim \cN(0, \sigma_t^2 \mI_d).
\end{align}
Therefore, for any $a_t \geq 0$, using the bound above with Lemma~\ref{lem:bound_shift} yields
\begin{align*}
    \mathrm{D}_q^{(z_{t+1})}{\left( \xx_{t+1} ~\|~ \xx_{t+1}'\right)}
    &= \mathrm{D}_q^{(z_{t+1})}{\left( \xx_t - \gamma_t  \left( \clip_{C_1}(G(\xx_t)) + \lambda \xx_t \right) + \xi_t ~\|~ \xx_t' - \gamma_t \left( \clip_{C_1}(G(\xx_t')) + \lambda \xx_t' \right) + \xi_t'\right)}\\
    &\leq \mathrm{D}_q^{(z_{t+1}+a_t)}{\left( \xx_t - \gamma_t  \left( \clip_{C_1}(G(\xx_t)) + \lambda \xx_t \right)~\|~ \xx_t' - \gamma_t \left( \clip_{C_1}(G(\xx_t')) + \lambda \xx_t' \right)\right)} + \frac{q a_t^2}{2 \sigma_t^2}.
\end{align*}

Now, using
Lemma~\ref{lem:lipschitz_constant}, and the fact that $\gamma_t < \tfrac{1}{\lambda}$, we establish that $\psi_t \colon \xx \mapsto \xx - \gamma_t \clip_{C_1}(G(\xx))$ satisfies $\forall \xx, \xx' \in \R^d, \norm{\psi_t(\xx')- \psi_t(\xx)} \leq (1-\lambda \gamma_t)\norm{\xx'-\xx} + 2\gamma_t C_1$.
Consequently, denoting $s_t \coloneqq 2\gamma_t C_1 $ and $\rho_t \coloneqq 1 - \lambda \gamma_t$, using the previous fact and Lemma~\ref{lem:lipschitz_shift} in the bound above yields
\begin{align*}
    \mathrm{D}_q^{(z_{t+1})}{\left( \xx_{t+1} ~\|~ \xx_{t+1}'\right)}
    &\leq \mathrm{D}_q^{(z_{t+1}+a_t)}{\left( \xx_t - \gamma_t \left( \clip_{C_1}(G(\xx_t)) + \lambda \xx_t \right)~\|~ \xx_t' - \gamma_t \left( \clip_{C_1}(G(\xx_t')) + \lambda \xx_t' \right)\right)} + \frac{q a_t^2}{2 \sigma_t^2}\\
    &\leq \mathrm{D}_q^{(\tfrac{1}{\rho_t}\left(z_{t+1} + a_t - s_t\right))}{\left( \xx_t ~\|~ \xx_t'\right)} + \frac{q a_t^2}{2 \sigma_t^2}.
\end{align*}
By denoting $z_t \coloneqq \tfrac{1}{\rho_t}\left(z_{t+1} + a_t - s_t\right)$, we have by recursion over $t \in \{0, \ldots, T-1\}$ for any $z_0, a_0, \ldots, a_T \geq 0$:
\begin{align}
    &\mathrm{D}_q^{(z_{T})}{\left( \xx_{T} ~\|~ \xx_{T}'\right)} \leq \mathrm{D}_q^{(z_{0})}{\left( \xx_{0} ~\|~ \xx_{0}'\right)} + \sum_{t=0}^{T-1} \frac{q a_t^2}{2 \sigma_t^2},\\
    & z_T = \left(\prod_{t=0}^{T-1} \rho_t \right) z_0 - \sum_{t=0}^{T-1} \left(\prod_{k=1}^{T-1-t} \rho_k \right) (a_t - s_t).
\end{align}
Observe that, upon taking $z_0 = 2C_0$, since $\norm{\xx_0, \xx_0'} \leq 2C_0$, it is immediate from definition~\eqref{eq:shifted_renyi} that $\mathrm{D}_q^{(z_{0})}{\left( \xx_{0} ~\|~ \xx_{0}'\right)} = 0$.
Additionally, taking $z_T = 0$ in the last equation implies that for all $a_0, \ldots, a_{T-1} \geq 0$ such that
\begin{align}
    \sum_{t=0}^{T-1}\left(\prod_{k=1}^{T-1-t} \rho_k \right) a_t = \left(\prod_{t=0}^{T-1} \rho_t \right) 2C_0 + \sum_{t=0}^{T-1} \left(\prod_{k=1}^{T-1-t} \rho_k \right) s_t,
\end{align}
we have
\begin{align}
     \mathrm{D}_q{\left( \xx_{T} ~\|~ \xx_{T}'\right)}
     = \mathrm{D}_q^{(0)}{\left( \xx_{T} ~\|~ \xx_{T}'\right)} \leq \sum_{t=0}^{T-1} \frac{q a_t^2}{2 \sigma_t^2}.
\end{align}
This concludes the first part of the second statement of the theorem.
The second part of the second statement is a direct consequence of setting, for all $t \in \{0, \ldots, T-1\}$,
\begin{align}
    a_t = \left[ \left(\prod_{k=0}^{T-1} \rho_k \right) 2C_0 + \sum_{k=0}^{T-1} \left(\prod_{l=1}^{T-1-k} \rho_l \right) s_k \right] \frac{\left(\prod_{k=1}^{T-1-t} \rho_k \right) \sigma_t^2}{\sum_{k=0}^{T-1} \left(\prod_{l=1}^{T-1-k} \rho_l^2 \right) \sigma_t^2}.
\end{align}
\end{proof}

\pabi*
\begin{proof}
    The proof of the first claim follows immediately by taking constant noise variance, stepsize, and zero regularization in the second statement of Theorem~\ref{th:general-pabi}, before converting from Rényi to $(\varepsilon, \delta)$-unlearning using standard conversion methods~\cite{mironov2017renyi}.

    Similarly, the proof of the second claim follows immediately by taking constant noise variance, and stepsize in the second statement of Theorem~\ref{th:general-pabi} (which assumes that $\gamma \lambda < 1$), before converting from Rényi to $(\varepsilon, \delta)$-unlearning using standard conversion methods~\cite{mironov2017renyi}.
    Indeed, we then get that it is sufficient to set
    \begin{equation*}
        \sigma^2 \geq \frac{\gamma \lambda (2-\gamma \lambda)}{2\varepsilon \left(1-(1-\gamma \lambda \right)^{2T})} \left[ 2C_0 \left(1-\gamma \lambda \right)^{T} + \frac{2C_1}{\lambda} \left(1-(1-\gamma \lambda \right)^{T}) \right]^2.
    \end{equation*}
    The right-hand side above can be upper bounded by $\frac{72\gamma \lambda\log(1/\delta)}{\varepsilon^2} \left( C_0 \left(1-\gamma \lambda \right)^{T} + \frac{C_1}{\lambda} \right)^2$ when assuming that $\gamma \lambda \geq \tfrac{1}{2}$. This concludes the proof.
\end{proof}

\clearpage
\section{Experiments}\label{sec:details}
We use small custom networks for training on MNIST and CIFAR10
\begin{lstlisting}[language=Python]
# used for mnist
class TinyNet(nn.Module):
    num_classes: int

    @nn.compact
    def __call__(self, x, train: bool = True, mutable=None):
        x = x.reshape((x.shape[0], -1))
        x = nn.Dense(features=5)(x)
        x = nn.relu(x)
        x = nn.Dense(features=self.num_classes)(x)
        return x

class CIFAR10TinykNet(nn.Module):
    num_classes: int

    @nn.compact
    def __call__(self, x, train: bool = True):
        he_init = nn.initializers.he_normal()
        x = nn.Conv(features=32, kernel_size=(3, 3), padding="same", kernel_init=he_init)(x)
        x = nn.relu(x)
        x = nn.avg_pool(x, window_shape=(2, 2), strides=(2, 2))
        x = nn.Conv(features=64, kernel_size=(3, 3), padding="same", kernel_init=he_init)(x)
        x = nn.relu(x)
        x = nn.avg_pool(x, window_shape=(2, 2), strides=(2, 2))
        x = x.mean(axis=(1, 2))
        x = nn.Dense(self.num_classes, kernel_init=he_init)(x)
        return x
\end{lstlisting}
In Tables~\ref{appendix_tab:cifar10} and \ref{appendix_tab:mnist} we give complete experimental details for the CIFAR and MNIST experiments.
\begin{table}[H]
    \centering
    \begin{tabular}{cc}
    \toprule
     Dataset    & CIFAR-10   \\
     Architecture    & Tiny Convolution Net (20k params)   \\
     Training objective & Cross entropy loss \\
     Evaluation objective & Top-1 accuracy \\ 
     \midrule
     Batch size & 128 \\
     Training learning rate & 0.1 \\
     Training learning rate schedule & Linear One Cycle \cite{onecyclelr_Smith2017SuperConvergenceVF} \\  
     Train weight decay & 0.0005\\
 Number of train epochs & 100 \\ 
    Forget set size & 10\% \\
    Number of unlearning epochs & 50\\
    Noise schedule & constant\\
    Unlearning learning rate schedule & constant\\
    Post Unlearning learning rate & 0.06\\
    Post Unlearning learning rate schedule & Linear One Cycle \\
    Post Unlearning weight decay & 0.0005\\
    \bottomrule
    \end{tabular}
    \caption{Experimental Setting CIFAR10}
    \label{appendix_tab:cifar10}
\end{table}

\begin{table}[H]
    \centering
    \begin{tabular}{cc}
    \toprule
     Dataset    & MNIST   \\
     Architecture    & Tiny 2 Layer Net (4k params)   \\
     Training objective & Cross entropy loss \\
     Evaluation objective & Top-1 accuracy \\ 
     \midrule
     Batch size & 128 \\
     Training learning rate & 0.06 \\
     Training learning rate schedule & Linear One Cycle \cite{onecyclelr_Smith2017SuperConvergenceVF} \\  
     Train weight decay & 0.0005\\
 Number of train epochs & 30 \\ 
    Forget set size & 10\% \\
    Number of unlearning epochs & 10\\
    Noise schedule & constant\\
    Unlearning learning rate schedule & constant\\
    Post Unlearning learning rate & 0.06\\
    Post Unlearning learning rate schedule & Linear One Cycle \\
    Post Unlearning weight decay & 0.0005\\
    \bottomrule
    \end{tabular}
    \caption{Experimental Setting MNIST}
    \label{appendix_tab:mnist}
\end{table}
\begin{table}[H]
    \centering
    \begin{tabular}{c c c c c c c c}
    \toprule
$\epsilon$    & Compute Budget & $\lambda$ & $C_1$ & $\gamma_t$ & $C_0$ &Unlearning Steps & $\sigma$\\\\
\midrule
1 & 1      & 10.0                  & 100.0            & 0.0001     & 0.01 &1&	0.028270\\
1 & 2      & 750.0                 & 10.0             & 0.0001     & 0.01 &6&	0.007752\\
1 & 3      & 750.0                 & 10.0             & 0.0001     & 0.01 &6&	0.007752\\
1 & 4      & 750.0                 & 10.0             & 0.0001     & 0.01 &6&	0.007752\\
1 & 5      & 10.0                  & 100.0            & 0.0001     & 0.01 &1&	0.028270\\
1 & 6      & 750.0                 & 10.0             & 0.0001     & 0.01 &6&	0.007752\\
1 & 7      & 10.0                  & 100.0            & 0.0001     & 0.01 &1&	0.028270\\
1 & 8      & 10.0                  & 100.0            & 0.0001     & 0.01 &1&	0.028270\\
1 & 9      & 10.0                  & 100.0            & 0.0001     & 0.01 &1&	0.028270\\
1 & 10     & 10.0                  & 100.0            & 0.0001     & 0.01 &1&	0.028270\\
\bottomrule
    \end{tabular}
    \caption{Hyperparameters for Gradient Clipping MNIST}
    \label{appendix_tab:mnist_pabi}
\end{table}

\begin{table}[H]
    \centering
    \begin{tabular}{c c c c c c c c}
    \toprule

$\epsilon$&	Compute Budget&	$C_2$&	$\sigma$&	$\eta$&	$\lambda$& Unlearning Steps\\
\midrule
1 &	1	&0.001	&0.01		&0.0001	& 900.00 & 1\\
1 &	2	&0.001	&0.01		&0.0001	& 900.00 & 1\\
1 &	3	&0.001	&0.01		&0.0001	& 900.00 & 1\\
1 &	4	&0.001	&0.01		&0.0001	& 500.00 & 1\\
1 &	5	&0.001	&0.01		&0.0100	& 0.01 & 1\\
1 &	6	&0.010	&0.01		&0.0010	& 900.00 & 6\\
1 &	7	&0.001	&0.01		&0.0010	& 10.00 & 1\\
1 &	8	&0.001	&0.01		&0.0100	& 900.00 & 1\\
1 &	9	&0.001	&0.01		&0.0001	& 10.00 & 1\\
1 &	10	&0.001	&0.01		&0.0010	& 10.00 & 1\\
\bottomrule
    \end{tabular}
    \caption{Hyperparameters for Model Clipping MNIST}
    \label{appendix_tab:mnist_cc}
\end{table}

\begin{table}[H]
    \centering
    \begin{tabular}{c c c c c c c c}
    \toprule
$\epsilon$     &Compute Budget (epochs) & $\lambda$ & $C_1$ & $\gamma$ & $C_0$& Unlearning Steps & $\sigma$\\
\midrule
1	& 1	&200.0	&100.0	&0.0010&	0.1&	1&	0.254558\\
1	& 4	&50.0	&10.0	&0.0100&	1.0&	5&	0.275702\\
1	& 7	&50.0	&10.0	&0.0100&	20.0&	11&	0.256790\\
1	& 10	&50.0	&100.0	&0.0001&	0.1&	10&	0.088213\\
1	& 13	&1.0	&10.0	&0.0010&	0.1&	10&	0.089197\\
1	& 16	&50.0	&10.0	&0.0010&	0.1&	10&	0.077256\\
1	& 19	&1.0	&10.0	&0.0010&	0.1&	10&	0.089197\\
1	& 22	&50.0	&100.0	&0.0001&	0.1&	10&	0.088213\\
1	& 25	&500.0	&100.0	&0.0010&	1.0&	5&	0.275702\\
1	& 28	&50.0	&10.0	&0.0100&	20.0&	11&	0.256790\\
1	& 31	&500.0	&100.0	&0.0010&	20.0&	11&	0.256790\\
1	& 34	&50.0	&1.0	&0.0010&	1.0&	93&	0.012501\\
1	& 37	&50.0	&100.0	&0.0010&	0.1&	1&	0.275772\\
1	& 40	&500.0	&100.0	&0.0010&	1.0&	5&	0.275702\\
1	& 43	&1.0	&10.0	&0.0100&	0.1&	1&	0.281429\\
1	& 46	&500.0	&100.0	&0.0010&	20.0&	11&	0.256790\\
1	& 49	&1.0	&10.0	&0.0100&	0.1&	1&	0.281429\\
\bottomrule
    \end{tabular}
    \caption{Hyperparameters for Gradient Clipping CIFAR}
    \label{appendix_tab:cifar_pabi}
\end{table}

\begin{table}[H]
    \centering
    \begin{tabular}{c c c c c c c c}
    \toprule
$\epsilon$&	Compute Budget (epochs)&	$C_2$&	$\sigma$& $\eta$&	$\lambda$ & Unlearning Steps\\
\midrule
1&	1&	0.200&	0.2&	0.0100&	100.0	&6\\
1&	4&	0.500&	0.5&	0.0010&	10.0	&6\\
1&	7&	0.500&	0.5&	0.0010&	10.0	&6\\
1&	10&	0.625&	0.5&	0.0001&	100.0	&9\\
1&	13&	0.625&	0.5&	0.0010&	100.0	&9\\
1&	16&	0.625&	0.5&	0.0001&	0.0	&9\\
1&	19&	0.500&	0.5&	0.0001&	0.0	&6\\
1&	22&	0.500&	0.5&	0.0010&	0.0	&6\\
1&	25&	0.625&	0.5&	0.0001&	10.0	&9\\
1&	28&	0.625&	0.5&	0.0010&	100.0	&9\\
1&	31&	0.625&	0.5&	0.0001&	1.0	&9\\
1&	34&	0.975&	0.5&	0.0001&	10.0	&36\\
1&	37&	0.625&	0.5&	0.0100&	10.0	&9\\
1&	40&	0.975&	0.5&	0.0001&	1.0	&36\\
1&	43&	0.500&	0.5&	0.0001&	10.0	&6\\
1&	46&	0.975&	0.5&	0.0001&	10.0	&36\\
1&	49&	0.500&	0.5&	0.0100&	100.0	&6\\
\bottomrule
    \end{tabular}
    \caption{Hyperparameters for Model Clipping CIFAR}
    \label{appendix_tab:cifar_cc}
\end{table}

\begin{table}[H]
    \centering
    \begin{tabular}{c c c c}
    \toprule
    $\epsilon$&	Compute Budget (epochs)&	$C_0$&	$\sigma$\\
\midrule
1&	1&	1.00&	9.689610\\
1&	4&	0.10&	0.968961\\
1&	7&	0.10&	0.968961\\
1&	10&	0.10&	0.968961\\
1&	13&	0.10&	0.968961\\
1&	16&	0.10&	0.968961\\
1&	19&	0.10&	0.968961\\
1&	22&	0.10&	0.968961\\
1&	25&	0.10&	0.968961\\
1&	28&	0.10&	0.968961\\
1&	31&	0.10&	0.968961\\
1&	34&	0.10&	0.968961\\
1&	37&	0.10&	0.968961\\
1&	40&	0.01&	0.096896\\
1&	43&	0.01&	0.096896\\
1&	46&	0.10&	0.968961\\
1&	49&	0.01&	0.096896\\
\bottomrule
    \end{tabular}
    \caption{Hyperparameters for Output Perturbation CIFAR}
    \label{appendix_tab:cifar_dp_baseline}
\end{table}

\begin{table}[H]
    \centering
    \begin{tabular}{c c c c}
    \toprule
    $\epsilon$&	Compute Budget (epochs)&	$C_0$&	$\sigma$\\
\midrule
1&	1&	0.01&	0.096896\\
1&	2&	0.01&	0.096896\\
1&	3&	0.01&	0.096896\\
1&	4&	0.01&	0.096896\\
1&	5&	0.01&	0.096896\\
1&	6&	0.01&	0.096896\\
1&	7&	0.01&	0.096896\\
1&	8&	0.01&	0.096896\\
1&	9&	0.01&	0.096896\\
1&	10&	0.01&	0.096896\\
\bottomrule
    \end{tabular}
    \caption{Hyperparameters for Output Perturbation MNIST}
    \label{appendix_tab:mnist_dp_baseline}
\end{table}
\section{Transfer Learning Experiments}
We use small three layer network as the head  on top of a frozen pretrained (on Imagenet) ResNet18 backbone for transfer learning experiments on CIFAR-10 and CIFAR-100
\begin{lstlisting}[language=Python]
class ThreeLayerNN(nn.Module):
    num_classes: int

    @nn.compact
    def __call__(self, x, train: bool = True, mutable=None):
        x = x.reshape((x.shape[0], -1))
        x = nn.Dense(features=32)(x)
        x = nn.relu(x)
        x = nn.Dense(features=32)(x)
        x = nn.relu(x)
        x = nn.Dense(features=self.num_classes)(x)
        return x
\end{lstlisting}
In Tables~\ref{appendix_tab:cifar10-fc},~\ref{appendix_tab:cifar-10_fc_pabi}, and~\ref{appendix_tab:cifar-100_fc_pabi} we give complete experimental details for the CIFAR-10 and CIFAR-100 transfer learning experiments.
\begin{table}[H]
    \centering
    \begin{tabular}{cc}
    \toprule
     Architecture    & Frozen Resnet-18 Backbone $+$ 3 Layer NN   \\
     Training objective & Cross entropy loss \\
     Evaluation objective & Top-1 accuracy \\ 
     \midrule
     Batch size & 128 \\
     Training learning rate & 0.1 \\
     Training learning rate schedule & Linear One Cycle \cite{onecyclelr_Smith2017SuperConvergenceVF} \\  
     Train weight decay & 0.0005\\
 Number of train epochs & 100 \\ 
 DP-SGD $\norm{.}_2-$clip & 0.5\\
    DP-SGD target group $\varepsilon$& 50\\
    DP-SGD target group $\delta$& 0.00001\\
    Forget set size & 10\% \\
    DP-SGD forget set size & 0.5\% \\
    Number of unlearning epochs & 50\\
    Noise schedule & constant\\
    Unlearning learning rate schedule & constant\\
    Post Unlearning learning rate & 0.06\\
    Post Unlearning learning rate schedule & Linear One Cycle \\
    Post Unlearning weight decay & 0.0005\\
    \bottomrule
    \end{tabular}
    \caption{Experimental Setting CIFAR-10 and CIFAR-100}
    \label{appendix_tab:cifar10-fc}
\end{table}

\begin{table}[H]
    \centering
    \begin{tabular}{c c c c c c c c}
    \toprule
$\epsilon$     &Compute Budget (epochs) & $\lambda$ & $C_1$ & $\gamma$ & $C_0$& Unlearning Steps & $\sigma$\\
\midrule
1 & 1 & 500.0 & 100.0 & 0.001 & 0.01 & 1 & 0.148492 \\
1 & 4 & 100.0 & 10.0 & 0.0001 & 0.01 & 10 & 0.008698 \\
1 & 7 & 0.50 & 1.0 & 0.001 & 0.01 & 10 & 0.008932 \\
1 & 10 & 0.50 & 10.0 & 0.0001 & 0.01 & 10 & 0.008943 \\
1 & 13 & 0.50 & 10.0 & 0.0001 & 0.01 & 10 & 0.008943 \\
1 & 16 & 10.0 & 10.0 & 0.001 & 0.01 & 1 & 0.028143 \\
1 & 19 & 10.0 & 10.0 & 0.001 & 0.01 & 1 & 0.028143 \\
1 & 22 & 10.0 & 10.0 & 0.001 & 0.01 & 1 & 0.028143 \\
1 & 25 & 10.0 & 10.0 & 0.001 & 0.01 & 1 & 0.028143 \\
1 & 28 & 10.0 & 10.0 & 0.001 & 0.01 & 1 & 0.028143 \\
1 & 31 & 100.0 & 100.0 & 0.0001 & 0.01 & 1 & 0.028143 \\
1 & 34 & 10.0 & 1.0 & 0.01 & 0.01 & 1 & 0.026870 \\
1 & 37 & 0.50 & 10.0 & 0.001 & 0.01 & 1 & 0.028277 \\
1 & 40 & 0.50 & 10.0 & 0.001 & 0.01 & 1 & 0.028277 \\
1 & 43 & 0.50 & 10.0 & 0.001 & 0.01 & 1 & 0.028277 \\
1 & 46 & 0.50 & 10.0 & 0.001 & 0.01 & 1 & 0.028277 \\
1 & 49 & 0.50 & 10.0 & 0.001 & 0.01 & 1 & 0.028277 \\
\bottomrule
    \end{tabular}
    \caption{Hyperparameters for Gradient Clipping Transfer Learning CIFAR-10}
    \label{appendix_tab:cifar-10_fc_pabi}
\end{table}

\begin{table}[H]
    \centering
    \begin{tabular}{c c c c c c c c}
    \toprule
$\epsilon$     &Compute Budget (epochs) & $\lambda$ & $C_1$ & $\gamma$ & $C_0$& Unlearning Steps & $\sigma$\\
\midrule
1 & 1 & 500 & 100 & 0.001 & 0.01 & 1 & 0.148492 \\
1 & 4 & 0.50 & 1 & 0.001 & 0.01 & 10 & 0.008932 \\
1 & 7 & 10 & 1 & 0.001 & 0.01 & 10 & 0.008698 \\
1 & 10 & 10 & 0.10 & 0.01 & 0.10 & 30 & 0.008524 \\
1 & 13 & 10 & 10 & 0.0001 & 0.01 & 10 & 0.008920 \\
1 & 16 & 100 & 1 & 0.001 & 0.10 & 30 & 0.008524 \\
1 & 19 & 0.50 & 1 & 0.001 & 0.01 & 10 & 0.008932 \\
1 & 22 & 500 & 10 & 0.0001 & 0.01 & 10 & 0.007726 \\
1 & 25 & 100 & 1 & 0.001 & 0.10 & 30 & 0.008524 \\
1 & 28 & 0.50 & 1 & 0.001 & 0.01 & 10 & 0.008932 \\
1 & 31 & 500 & 10 & 0.001 & 1 & 10 & 0.025667 \\
1 & 34 & 0.50 & 10 & 0.0001 & 0.01 & 10 & 0.008943 \\
1 & 37 & 500 & 10 & 0.001 & 1 & 10 & 0.025667 \\
1 & 40 & 500 & 10 & 0.001 & 1 & 10 & 0.025667 \\
1 & 43 & 500 & 10 & 0.001 & 1 & 10 & 0.025667 \\
1 & 46 & 500 & 10 & 0.001 & 1 & 10 & 0.025667 \\
1 & 49 & 500 & 10 & 0.001 & 1 & 10 & 0.025667 \\
\bottomrule
    \end{tabular}
    \caption{Hyperparameters for Gradient Clipping Transfer Learning CIFAR-100}
    \label{appendix_tab:cifar-100_fc_pabi}
\end{table}

\section{$\varepsilon$ Sweep}
In this section, we evaluate how the choice of $\epsilon$ affects the performance of our algorithm. For that, in addition to $\epsilon =1$ used in the paper, we plot the performance of the gradient clipping algorithm \eqref{eq:gradientclipping} for $\epsilon = 0.1$ and $\epsilon = 10$. We kept fixed $\delta = 10^{-5}$ for all of the epsilons. See Figure~\ref{fig:eps-sweep} for results. We can see that $\epsilon = 0.1$ degrades the performance of our algorithm significantly compared to $\epsilon = 1$. There is very little difference between $\varepsilon=1$ and $\varepsilon=10$ but a performance penalty for $\varepsilon=0.1$ that is more visible with the harder task (CIFAR-10). 
\begin{figure*}[htp]
    \centering
    \vspace{3mm}
        \begin{subfigure}{0.49\textwidth}
        \centering
        \includegraphics[width=\linewidth]{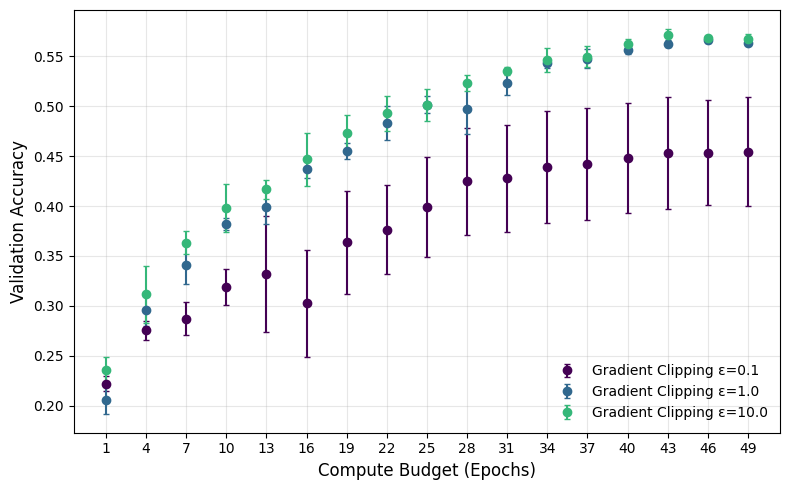}
        \caption{CIFAR-10}
        \label{fig:sub2}
    \end{subfigure}
    \hfill
    \begin{subfigure}{0.49\textwidth}
        \centering
        \includegraphics[width=\linewidth]{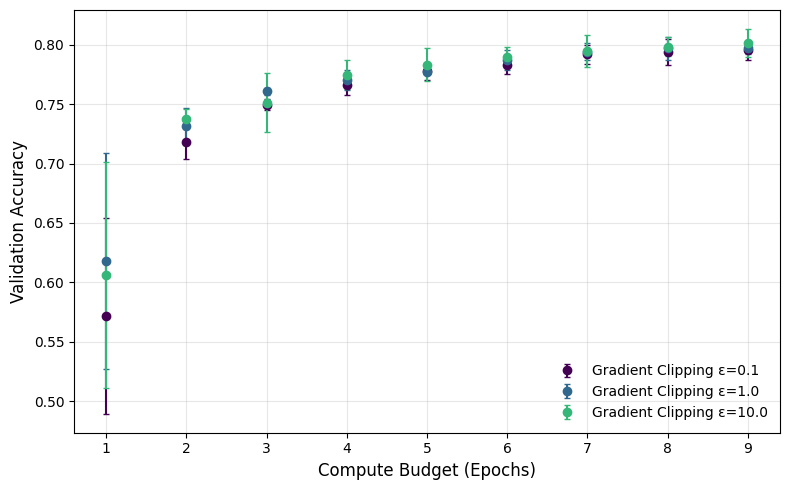}
        \caption{MNIST}
        \label{fig:sub1}
    \end{subfigure}
    \caption{Accuracy of \emph{Gradient Clipping} versus compute budget (epochs) on  CIFAR-10 (left) and MNIST (right), to satisfy $(\varepsilon, 10^{-5})$-unlearning for $\varepsilon \in \{0.1, 1, 10\}$.
    }
    \label{fig:eps-sweep}
\end{figure*}

 \begin{table}[H]
        \centering
        \begin{tabular}{c c c c c c c c}
        \toprule
    Compute Budget (epochs) & $\epsilon$     & $\lambda$ & $C_1$ & $\gamma$ & $C_0$& Unlearning Steps & $\sigma$\\
    \midrule
1 & 0.1 & 500 & 100 & 0.001 & 1 & 5 & 0.871847 \\
4 & 0.1 & 500 & 100 & 0.001 & 1 & 5 & 0.871847 \\
7 & 0.1 & 500 & 100 & 0.001 & 1 & 5 & 0.871847 \\
10 & 0.1 & 500 & 100 & 0.001 & 1 & 5 & 0.871847 \\
13 & 0.1 & 500 & 100 & 0.001 & 1 & 5 & 0.871847 \\
16 & 0.1 & 500 & 100 & 0.001 & 1 & 5 & 0.871847 \\
19 & 0.1 & 500 & 100 & 0.001 & 1 & 5 & 0.871847 \\
22 & 0.1 & 500 & 100 & 0.001 & 1 & 5 & 0.871847 \\
25 & 0.1 & 500 & 100 & 0.001 & 1 & 5 & 0.871847 \\
28 & 0.1 & 500 & 100 & 0.001 & 1 & 5 & 0.871847 \\
31 & 0.1 & 500 & 100 & 0.001 & 1 & 5 & 0.871847 \\
34 & 0.1 & 500 & 100 & 0.001 & 1 & 5 & 0.871847 \\
37 & 0.1 & 500 & 100 & 0.001 & 1 & 5 & 0.871847 \\
40 & 0.1 & 500 & 100 & 0.001 & 1 & 5 & 0.871847 \\
43 & 0.1 & 500 & 100 & 0.001 & 1 & 5 & 0.871847 \\
46 & 0.1 & 500 & 100 & 0.001 & 1 & 5 & 0.871847 \\
49 & 0.1 & 500 & 100 & 0.001 & 1 & 5 & 0.871847 \\
1 & 1 & 500 & 100 & 0.001 & 1 & 5 & 0.275702 \\
4 & 1 & 1 & 10 & 0.001 & 0.1 & 10 & 0.089197 \\
7 & 1 & 500 & 100 & 0.001 & 1 & 5 & 0.275702 \\
10 & 1 & 1 & 10 & 0.001 & 0.1 & 10 & 0.089197 \\
13 & 1 & 500 & 100 & 0.001 & 1 & 5 & 0.275702 \\
16 & 1 & 500 & 100 & 0.001 & 1 & 5 & 0.275702 \\
19 & 1 & 500 & 100 & 0.001 & 1 & 5 & 0.275702 \\
22 & 1 & 500 & 100 & 0.001 & 1 & 5 & 0.275702 \\
25 & 1 & 500 & 100 & 0.001 & 1 & 5 & 0.275702 \\
28 & 1 & 500 & 100 & 0.001 & 1 & 5 & 0.275702 \\
31 & 1 & 500 & 100 & 0.001 & 1 & 5 & 0.275702 \\
34 & 1 & 500 & 100 & 0.001 & 1 & 5 & 0.275702 \\
37 & 1 & 500 & 100 & 0.001 & 1 & 5 & 0.275702 \\
40 & 1 & 500 & 100 & 0.001 & 1 & 5 & 0.275702 \\
43 & 1 & 500 & 100 & 0.001 & 1 & 5 & 0.275702 \\
46 & 1 & 500 & 100 & 0.001 & 1 & 5 & 0.275702 \\
49 & 1 & 500 & 100 & 0.001 & 1 & 5 & 0.275702 \\
1 & 10 & 1 & 100 & 0.1 & 0.1 & 1 & 4.512385 \\
4 & 10 & 1 & 10 & 0.1 & 0.1 & 1 & 0.487463 \\
7 & 10 & 0.1 & 10 & 0.1 & 1 & 1 & 0.889955 \\
10 & 10 & 0.1 & 10 & 0.1 & 0.1 & 1 & 0.491488 \\
13 & 10 & 0.1 & 10 & 0.1 & 1 & 1 & 0.889955 \\
16 & 10 & 0.1 & 10 & 0.1 & 0.1 & 1 & 0.491488 \\
19 & 10 & 1 & 10 & 0.1 & 0.1 & 1 & 0.487463 \\
22 & 10 & 0.1 & 10 & 0.1 & 0.1 & 1 & 0.491488 \\
25 & 10 & 0.1 & 10 & 0.1 & 1 & 1 & 0.889955 \\
28 & 10 & 0.1 & 0.01 & 0.1 & 0.1 & 70 & 0.007260 \\
31 & 10 & 1 & 0.1 & 0.1 & 10 & 53 & 0.026744 \\
34 & 10 & 25 & 5 & 0.01 & 10 & 19 & 0.071419 \\
37 & 10 & 0.1 & 10 & 0.1 & 0.1 & 1 & 0.491488 \\
40 & 10 & 1 & 10 & 0.001 & 0.1 & 10 & 0.028207 \\
43 & 10 & 25 & 5 & 0.01 & 10 & 19 & 0.071419 \\
46 & 10 & 25 & 5 & 0.01 & 10 & 19 & 0.071419 \\
49 & 10 & 1 & 0.1 & 0.1 & 1 & 30 & 0.026955 \\
    \bottomrule
        \end{tabular}
        \caption{Hyperparameters for Gradient Clipping CIFAR-10 for $\varepsilon \in \{10, 1, 0.1\}$}
        \label{appendix_tab:cifar-10_eps_sweep_pabi}
    \end{table}

\begin{table}[H]
        \centering
        \begin{tabular}{c c c c c c c c}
        \toprule
    $\epsilon$     &Compute Budget (epochs) & $\lambda$ & $C_1$ & $\gamma$ & $C_0$& Unlearning Steps & $\sigma$\\
    \midrule
    10 & 1 & 750 & 10 & 0.0001 & 0.01 & 6 & 0.002451 \\
    10 & 2 & 750 & 10 & 0.0001 & 0.01 & 6 & 0.002451 \\
    10 & 3 & 750 & 10 & 0.0001 & 0.01 & 6 & 0.002451 \\
    10 & 4 & 750 & 10 & 0.0001 & 0.01 & 6 & 0.002451 \\
    10 & 5 & 10 & 100 & 0.0001 & 0.01 & 1 & 0.008940 \\
    10 & 6 & 750 & 10 & 0.0001 & 0.01 & 6 & 0.002451 \\
    10 & 7 & 10 & 100 & 0.0001 & 0.01 & 1 & 0.008940 \\
    10 & 8 & 750 & 10 & 0.0001 & 0.01 & 6 & 0.002451 \\
    10 & 9 & 10 & 100 & 0.0001 & 0.01 & 1 & 0.008940 \\
    1 & 1 & 10 & 100 & 0.0001 & 0.01 & 1 & 0.028270 \\
    1 & 2 & 750 & 10 & 0.0001 & 0.01 & 6 & 0.007752 \\
    1 & 3 & 750 & 10 & 0.0001 & 0.01 & 6 & 0.007752 \\
    1 & 4 & 750 & 10 & 0.0001 & 0.01 & 6 & 0.007752 \\
    1 & 5 & 10 & 100 & 0.0001 & 0.01 & 1 & 0.028270 \\
    1 & 6 & 10 & 100 & 0.0001 & 0.01 & 1 & 0.028270 \\
    1 & 7 & 750 & 10 & 0.0001 & 0.01 & 6 & 0.007752 \\
    1 & 8 & 10 & 100 & 0.0001 & 0.01 & 1 & 0.028270 \\
    1 & 9 & 10 & 100 & 0.0001 & 0.01 & 1 & 0.028270 \\
    0.1 & 1 & 10 & 100 & 0.0001 & 0.01 & 1 & 0.089398 \\
    0.1 & 2 & 10 & 100 & 0.0001 & 0.01 & 1 & 0.089398 \\
    0.1 & 3 & 10 & 100 & 0.0001 & 0.01 & 1 & 0.089398 \\
    0.1 & 4 & 750 & 10 & 0.0001 & 0.01 & 6 & 0.024514 \\
    0.1 & 5 & 10 & 100 & 0.0001 & 0.01 & 1 & 0.089398 \\
    0.1 & 6 & 10 & 100 & 0.0001 & 0.01 & 1 & 0.089398 \\
    0.1 & 7 & 10 & 100 & 0.0001 & 0.01 & 1 & 0.089398 \\
    0.1 & 8 & 10 & 100 & 0.0001 & 0.01 & 1 & 0.089398 \\
    0.1 & 9 & 10 & 100 & 0.0001 & 0.01 & 1 & 0.089398 \\
    \bottomrule
        \end{tabular}
        \caption{Hyperparameters for Gradient Clipping MNIST for $\varepsilon \in \{10, 1, 0.1\}$}
        \label{appendix_tab:mnist_eps_sweep_pabi}
    \end{table}

\end{document}